\documentclass{article}
\usepackage{PRIMEarxiv}

\usepackage[utf8]{inputenc} 
\usepackage[T1]{fontenc}    
\usepackage{hyperref}       
\usepackage{url}            
\usepackage{booktabs}       
\usepackage{amsfonts}       
\usepackage{nicefrac}       
\usepackage{microtype}      
\usepackage{fancyhdr}       
\usepackage{graphicx}       

\usepackage{color}
\usepackage{mdframed} 
\usepackage{amssymb}   
\usepackage{titlesec}
\usepackage{glossaries}
\usepackage{subcaption}

\usepackage{tabularx}
\usepackage{times}
\usepackage{epsfig}
\usepackage{cite}

\usepackage{multirow}%
\usepackage{amsmath,amssymb}%
\usepackage{amsthm}%
\usepackage{mathrsfs}%
\usepackage[title]{appendix}%
\usepackage{xcolor}%
\usepackage{textcomp}%
\usepackage{manyfoot}%
\usepackage{booktabs}%
\usepackage{algorithm}%
\usepackage{algorithmicx}%
\usepackage{algpseudocode}%
\usepackage{listings}%

\newtheorem{theorem}{Theorem}[section]
\newtheorem{lemma}[theorem]{Lemma}%

\newtheorem{definition}{Definition}%
\newtheorem{corollary}{Corollary}%

\title{A Markovian View of Iterative-Feedback Loops in Image Generative Models: Neural Resonance and Model Collapse}

\author{Vibhas Kumar Vats$^1$,
David J. Crandall$^1$,
samuel Goree$^2$ \\
$^1$ Luddy School of Informatics, Computing, and Engineering, Indiana University, Bloomington, IN, USA\\
$^2$ Department of Computer Science, Stonehill College, Easton, MA, USA}

\begin{document}
\maketitle

\begin{abstract}
AI training datasets will inevitably contain 
   AI-generated examples, leading to ``feedback'' in which the output
   of one model impacts the training of another.  It is known that
   such iterative feedback can lead to model collapse, yet the
   mechanisms underlying this degeneration remain poorly understood.
Here we show that a broad class of feedback
processes converges to a low-dimensional
invariant structure in latent space, a phenomenon we call \emph{neural resonance.}
By modeling iterative feedback as a 
 Markov Chain, we 
 show that two conditions are needed for this resonance to occur:
\emph{ergodicity} of the feedback process and \emph{directional contraction} 
of the latent representation. 
By studying diffusion models on MNIST and ImageNet, as well as CycleGAN and  an audio feedback experiment, 
we map how local and global manifold geometry evolve, and we introduce an eight-pattern taxonomy of collapse behaviors. 
Neural resonance provides a
unified explanation for long-term degenerate behavior in generative models
and provides practical diagnostics
for identifying, characterizing, and eventually mitigating collapse.
\end{abstract}
\noindent\textbf{Keywords:} self-training loops; iterative-feedback loops; generational Markov chain; neural resonance; model collapse.



\section{Introduction}
\label{sec:intro}

\begin{figure*}[t]
    \centering
    \includegraphics[width=\textwidth]{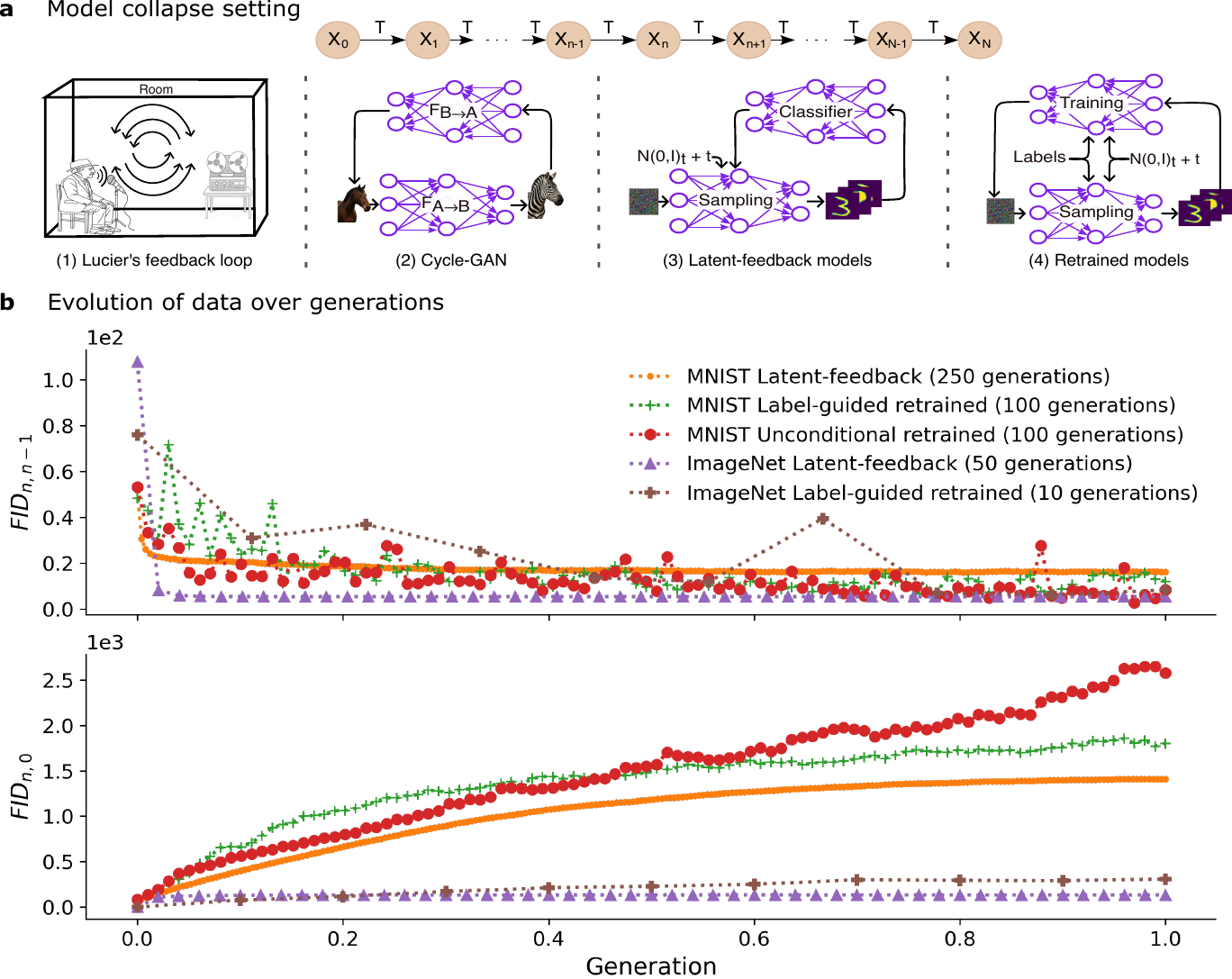}
    \caption{\textbf{A high-level representation of the Iterative feedback process.} (a) A graphical representation of 
    the generational Markov chain and cartoon overview of our iterative generation settings. $X_n$ represents the 
    current distribution of images at generation $n$.
    (b) Data distribution distances for two experiments in scenario 
    (3), and three experiments in scenario (4), measured using FID score. The top plot shows single-step changes
    between consecutive generations, while the bottom plot shows cumulative change from the original distribution. 
    Experimental runs with differing numbers of generations are scaled to the width of the graph, so that the x-axis represents fraction of total generations, and each point is one generation.}
    \label{fig:gen-chain-and-drivers}
\end{figure*}
%

Generative AI models are now so widely used that the text, imagery,
audio, and video that they create will inevitably ``pollute'' the
training datasets of the next generation of models --- and those
models, in turn, will pollute the training datasets of the subsequent
generation, and so on.  This can create an iterative feedback loop,
causing training sets to co-evolve with subsequent generations of
models~\cite{aastrom2021feedback} and drift away from the true data
distribution they were meant to represent.

Recent work has shown that
such feedback can eventually lead to ``model
collapse''~\cite{shumailov2024ai}, with potentially catastrophic
implications for modern AI systems.
However, the underlying dynamics of iterative feedback leading to model
collapse are poorly understood.  For example, do generative models
undergoing iterative feedback behave chaotically, or do they converge
to predictable stable points?  Do distributions of latent representations contract or
expand, do semantic modes survive or decay, and does generational
drift reach a stable point or does it continue forever?

Of course, systems undergoing iterative feedback are nothing new.  To
study latent feedback in AI models, we were inspired by a work of sonic art.  In 1969, Alvin Lucier's avant-garde composition \emph{I Am Sitting in a
Room}~\cite{lucier1969sitting} interrogates similar concepts. In it, Lucier speaks several sentences
into a tape recorder explaining his process, and then plays it back in his Connecticut apartment while
re-recording it on a second device. While the first recording
faithfully captures the words and his voice, after multiple iterations the sound of
his speech dissolves into tones that have more to do with
the geometry of the room and the noise characteristics of the
equipment than what he actually said. This degradation process can be
well explained by physical laws that govern how transformations affect audio
signals over time. We wondered: are there similar principles that explain
how AI models will react to iterative feedback?

In this paper we show, for the first time, that many generative
feedback processes follow a surprisingly predictable long-term
behavior.  Across several diverse model classes, including diffusion
models trained on their own outputs, deep style transfer models
applied repeatedly on the same image, and even Lucier's sonic art
piece, we find that iterative feedback often pushes the latent
representation toward a low-dimensional invariant structure. We call
this phenomenon \emph{neural resonance}, in analogy to resonance in
physical systems in which repeatedly-applied transformations can cause
some modes to be amplified while others are suppressed.

To characterize neural resonance and how it arises, we model iterative
feedback as a Markov Chain, choosing states
and transitions depending on the feedback situation to be modeled.
Using this general framework, we identify two conditions that are
needed for resonance to emerge: \emph{ergodicity}, meaning that the
chain converges to a unique stationary distribution that is
independent of the initial state~\cite{meynandtweedie2009}, and
\emph{directional contraction}, in which latent features shrink
towards a progressively smaller set of axes.  Under these conditions,
the system exhibits two distinct phases: an initial \emph{transient}
phase when the distribution changes rapidly from one iteration to the
next, followed by a \emph{stationary} phase in which the distribution
remains essentially unchanged.  When either condition fails,
the latent features expand and the system does not exhibit neural
resonance (Table \ref{tab:experiment-summary-sparkline}).

We introduce simple diagnostic measures for identifying resonance,
based on complementary measures of local and cumulative drift between
iterations.  We introduce a compact eight-pattern taxonomy of semantic
expansion and contraction behaviors
(Table~\ref{table:ideal-manifold-behaviors}) that describes how local
and global latent geometry evolve under feedback. Applying the
taxonomy across different types of feedback with different datasets
shows that data compressibility strongly influences the outcome:
highly compressible datasets such as MNIST keep their semantics over
generations, whereas more diverse datasets like ImageNet lose
semantics and collapse into very simple examples.

Taken together, our results provide a framework for understanding
the dynamics of generative feedback, including model collapse.  Neural
resonance explains why collapse occurs, how long-term latent
structures stabilize, and why models trained on diverse real data are
likely to outperform models trained on synthetic images.  Our findings
also suggest practical diagnostic tools for monitoring and mitigating
degeneration caused by iterative feedback, which may be essential in
future AI systems as synthetic data becomes increasingly common.

\section{Iterative Feedback as a Markov Chain}

We model the iterative feedback process as a Markov
Chain~\cite{meynandtweedie2009}.
We consider two main types of
iterative feedback. First, we consider feedback that is at the image
or sample level: a single image that is transformed back and forth
using CycleGAN, for example, or the audio signal that is iteratively
transformed in Lucier's piece.
In the first case, the state space of the Markov 
Chain consists of all possible images. Starting from an initial input image
$X_0$, each iteration of feedback is modeled as an operator
$T(\cdot)$ which produces $X_{n+1}=T(X_n)$ (see
Fig.~\ref{fig:gen-chain-and-drivers}a).
Second, we consider  
when the feedback is instead at the dataset level, such as a model that
is trained on its own previous outputs.
In this case, the state space conceptually
consists of all possible dataset distributions, and the system moves
from state to state to describe how distributions evolve across
successive generations.
Importantly, in either case, the process obeys the Markov property that 
future states (images or data distributions) depend only on the current state,
not the full history.

If a Markov Chain is ergodic, it converges to a unique 
stationary distribution, regardless of
the initial state (see \S3 of Supplementary for in-depth discussion).
In our context,
this corresponds to when the distribution of generated
images stabilizes such that images $X_{n+1}$ and $X_n$ are drawn from
statistically indistinguishable distributions.
In practice, this often means that the images generated by subsequent iterations
become very similar to one another.
To quantify convergence in Markov Chains, we monitor two complementary measures
of drift using Fréchet Inception Distances (FID): the local drift,
$\operatorname{FID}_{n,n-1}$, and the cumulative drift,
$\operatorname{FID}_{n,0}$, as shown in
Fig.\ref{fig:gen-chain-and-drivers}-b.  When both curves plateau, the
process is considered to have reached \emph{empirical stationarity},
meaning the change in FID score from one
generation to the next is less than some tolerance $\epsilon$.
Fig.\ref{fig:gen-chain-and-drivers}-b shows the local drift
$\operatorname{FID}_{n,n-1}$ (top) and the cumulative drift
$\operatorname{FID}_{n,0}$ (bottom) for three Markov Chains\textemdash
latent-feedback (conditioned on latent image representations), 
label-guided retrained (conditioned on class labels), and
unconditional-retrained (no conditionals)\textemdash evaluated on MNIST and ImageNet-5.
For MNIST, FID scores are computed in the latent space of a locally
trained classifier, while for ImageNet-5 they are computed in the
Inception-V3 feature space.  Across all settings, local drift
decreases rapidly and stabilizes, while cumulative drift rises and
then saturates for the latent-feedback and label-guided retrained
chains, yielding a joint plateau that signifies empirical
stationarity.  By contrast, the unconditional-retrained chain exhibits
persistent local and cumulative drift across all 100 generations,
indicating that it has not yet reached stationarity.  The
corresponding evolution of class exemplars is shown in
Fig. \ref{fig:result-examples}(a-e).

\section{Neural Resonance}

The convergence of a Markov Chain toward a stationary distribution mirrors the
acoustic phenomenon in Alvin Lucier's piece \emph{I Am Sitting in a
Room}.  In that work, each re-recording applies the room's linear
transfer operator: frequencies misaligned with the dominant eigenmodes
decay exponentially, leaving only the resonant `room chord.'  This
process illustrates a general rule: when a transformation is
repeated, the system selectively amplifies the modes it can sustain.
An analogous mechanism unfolds in neural network–based generative
feedback systems.  In an ergodic Markov Chain, which converges to a
unique stationary distribution independent of
initialization, repeated application of the feedback
operator $T$ filters the latent representation, damping off-manifold
directions and preserving modes aligned with the invariant manifold.
Ergodicity ensures that the Markov Chain eventually forgets its initialization
and converges toward a stationary distribution confined to a
contracted region of the latent manifold.  Contraction implies that
most representational axes in this space are progressively attenuated
under feedback, while a small subset persists across generations.
Iterative feedback thus concentrates representations onto this
invariant subset, analogous to physical resonance.
When this Lucier-like selective convergence appears within a
fixed latent feature space, we refer to it as \emph{neural resonance}.

Two conditions are jointly necessary to observe neural 
resonance: \emph{directional contraction} 
and \emph{ergodic Markov Chains}. 
Directional contraction shapes the geometric limit of the latent manifold by 
progressively shrinking and organizing its representation, 
thereby stabilizing the manifold’s effective degrees of freedom across generations.
As in Lucier’s piece, repeated playback preferentially reinforces the room’s 
resonant frequencies until they dominate across iterations. Similarly, in ergodic Markov Chains,
directional contraction selects latent modes while ergodicity sustains them, as exemplified by 
latent-feedback, label-guided retrained and unconditional retrained models.
By contrast, in non-ergodic Markov Chains, 
iterative feedback can select modes without sustaining them.
Deterministic or weak-noise feedback 
loops (e.g., CycleGAN) exemplify this behavior.

\begin{table}[t]
  \centering
  \caption{\textbf{Latent‑manifold dynamics under neural resonance.}
    Each row lists a distinct combination of directional change in the local spread ($\sigma_{intra}$), 
    the local curvature measure ($m_{LB}$), and the global participation ratio ($PR_G$). 
    Patterns are grouped into two regimes, semantic expansion ($\sigma_{intra}\uparrow$) 
    and semantic contraction ($\sigma_{intra}\downarrow$), and each is labeled 
    according to its geometric character (e.g., coherent expansion, wrinkled contraction). 
    An animation illustrating these patterns is provided in the Supplementary Video.}
  \begin{tabular}{llcccl}
    \toprule
    \multicolumn{2}{c}{\textbf{Manifold Behavior}} 
      & \multicolumn{2}{c}{\textbf{Local Metrics}} & \textbf{Global Metric} & \textbf{Dimensional Pattern}\\
    \cmidrule(lr){1-2} \cmidrule(lr){3-4} \cmidrule(lr){5-5}
    Local & Global & $\sigma_{intra}$ & $m_{LB}$ & $PR_G$ & \\
    \midrule
    \addlinespace[3pt]
    \multicolumn{6}{l}{\textbf{Semantic Expansion Regime}} \\
    \addlinespace[3pt]
    Expansion       & Expansion    & $\uparrow$ & $\uparrow$ & $\uparrow$ & Coherent Expansion (CE)\\[2pt]
    Expansion       & Contraction  & $\uparrow$ & $\uparrow$ & $\downarrow$ & Wrinkled Expansion (WE)\\[2pt]
    Expansion     & Expansion    & $\uparrow$ & $\downarrow$ & $\uparrow$ & Anisotropic Expansion (AE)\\[2pt]
    Expansion     & Contraction  & $\uparrow$ & $\downarrow$ & $\downarrow$ & Oblate Expansion (OE)\\[2pt]
    \midrule
    \addlinespace[2pt]
    \multicolumn{6}{l}{\textbf{Semantic Contraction Regime}} \\
    \addlinespace[2pt]
    Contraction    & Contraction  & $\downarrow$ & $\downarrow$ & $\downarrow$ & Coherent Contraction (CC)\\[2pt]
    Contraction   & Expansion    & $\downarrow$ & $\downarrow$ & $\uparrow$ & Anisotropic Contraction (AC)\\[2pt]
    Contraction     & Contraction  & $\downarrow$ & $\uparrow$   & $\downarrow$ & Wrinkled Contraction (WC)\\[2pt]
    Contraction     & Expansion    & $\downarrow$ & $\uparrow$   & $\uparrow$ & Oblate Contraction (OC)\\[2pt]
    \bottomrule
  \end{tabular}
  \label{table:ideal-manifold-behaviors}
\end{table}
\subsection{Latent Manifold Dynamics}

We characterize latent-manifold dynamics in terms of dimensionality
using two complementary metrics: the participation ratio, $PR_G$, which tracks
global dimensionality; and the Levina-Bickel intrinsic dimension\cite{Levina2005}, $m_{LB}$, which tracks 
local dimensionality. We also use intra-class spread, $\sigma_{intra}$, measuring classwise dispersion,
to observe local expansion and contraction. 
Under iterative feedback, components orthogonal to the eventual invariant subspace are progressively damped, 
and thus $PR_G$ contracts and stabilizes 
at a floor reflecting the dimension of that subspace. Within that emerging subspace, 
local structure can still reconfigure\textemdash both $m_{LB}$ 
and $\sigma_{intra}$ may transiently 
rise or fall before settling\textemdash without undoing the global contraction.

We characterize changes in the latent manifold by examining 
metric trajectories across generations. 
Taken together, $PR_G$ (global) and $m_{LB}$ (local) delineate eight dimensional 
patterns that summarize how local and global manifold geometry co-evolve 
under feedback (see Table~\ref{table:ideal-manifold-behaviors}); 
$\sigma_{intra}$ indicates local expansion versus contraction regimes. These patterns form pairs, where one pattern is the opposite of another.
These patterns are not always intuitive; physical analogies, 
together with Supplementary video animation, clarify them.

\begin{itemize}
  \item \emph{Coherent expansion (CE).} $\sigma_{intra}\!\uparrow$ activates new axes locally and globally: $PR_G\!\uparrow$, $m_{LB}\!\uparrow$. 
The manifold simultaneously puffs out and unfolds, like an inflating balloon.  
\textit{Coherent contraction (CC)} is the reverse.
  \item \emph{Oblate expansion (OE).} $\sigma_{intra}\!\uparrow$ along one or a few dominant 
directions while collapsing elsewhere: $PR_G\!\downarrow$, $m_{LB}\!\downarrow$. 
The manifold flattens and spreads along a few axes, like a balloon squeezed between hands, 
expanded sideways along a few axes while collapsed along others.  
\textit{Oblate contraction (OC)} is the reverse.
  \item \emph{Wrinkled expansion (WE).} $\sigma_{intra}\!\uparrow$ forms small-scale 
wrinkles or folds: $m_{LB}\!\uparrow$ while $PR_G\!\downarrow$. 
The manifold fans out locally as fine wrinkles form, reducing global spread, like crumpled fabric 
with wrinkles.  
\textit{Anisotropic contraction (AC)} is the reverse.
  \item \emph{Anisotropic expansion (AE).} $\sigma_{intra}\!\uparrow$ stretches clusters along different axes: $m_{LB}\!\downarrow$, $PR_G\!\uparrow$. 
Local structure collapses along stretched directions while new global axes are activated,
like multiple pieces of play-dough being stretched in different directions, each elongating along 
its own axis while globally occupying a larger space.  
\textit{Wrinkled contraction (WC)} is the reverse.
\end{itemize}

\begin{table}[t]
\caption{Summary of observed latent manifold dynamics with sparkline plots. The 
dynamics is indicated with Coherent Expansion (CE), Wrinkled Expansion (WE),
Oblate Expansion (OE), Coherent Contraction (CC), and Wrinkled Contraction (WC). LGR and LF 
means label-guided retrained and latent-feedback models, respectively. CyclaGAN experiment uses 
Horse-Zebra data. See Supplementary Materials for full plots with axes.}
\centering
\begin{tabular}{l@{\hskip 2pt}c@{}c@{}c@{}c}
\toprule
\textbf{Experiments} & $\mathbf{\sigma_{intra}}$ & $\mathbf{m_{LB}}$ & $\mathbf{PR_G}$ & \bf{Dynamics}\\
\midrule
LGR-MNIST              
& \includegraphics[width=3cm]{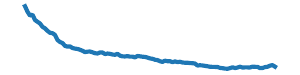}
& \includegraphics[width=3cm]{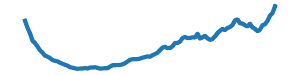} 
& \includegraphics[width=3cm]{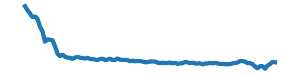}
& CC$\rightarrow$WC
\\[3pt]
LF-MNIST         
& \includegraphics[width=3cm]{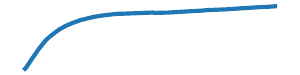}
& \includegraphics[width=3cm]{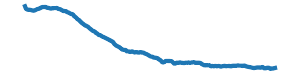} 
& \includegraphics[width=3cm]{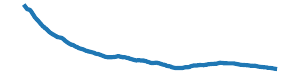}
& OE
\\[3pt]
LGR-ImageNet5       
& \includegraphics[width=3cm]{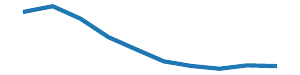}
& \includegraphics[width=3cm]{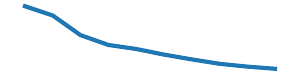} 
& \includegraphics[width=3cm]{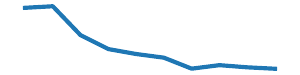}
& CC
\\[3pt]
LF-ImageNet5       
& \includegraphics[width=3cm]{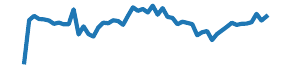}
& \includegraphics[width=3cm]{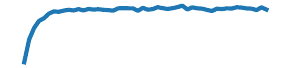} 
& \includegraphics[width=3cm]{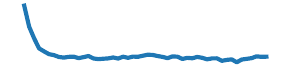}
& WE
\\[3pt]
CycleGAN  
& \includegraphics[width=3cm]{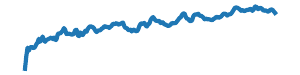}
& \includegraphics[width=3cm]{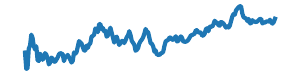} 
& \includegraphics[width=3cm]{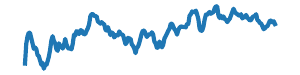}
& CE\\
\bottomrule
\end{tabular}
\label{tab:experiment-summary-sparkline}
\end{table}

Empirically, we observe five of the eight possible dimensional patterns across our 
iteration experiments, see Table \ref{tab:experiment-summary-sparkline}. 
These patterns are not tied to a single feedback regime: 
different scenarios can express the same pattern, 
and a given scenario can transition between patterns over time. 
First, distinct Markov Chains can show the same local–global geometry 
with opposite locality: both LF-MNIST and LGR-ImageNet-5 exhibit 
decreasing $PR_G$ and $m_{LB}$, but $\sigma_{intra}$ rises in LF-MNIST 
(oblate expansion, OE) and falls in LGR-ImageNet-5 (coherent contraction, CC), 
revealing expansion vs. contraction regimes. 
Second, similar Markov Chains can diverge by dataset: LF-MNIST follows OE, 
whereas LF-ImageNet-5 follows wrinkled expansion (WE). 
Third, a single chain can have multiple patterns over generations: 
LGR-MNIST begins in CC and later shifts to wrinkled contraction (WC; $m_{LB}$ turns upward). 
By contrast, CycleGAN is non-ergodic: trajectories do not settle into a single invariant subspace 
but switch among multiple modes (see Fig.\ref{fig:result-examples}-f), with significant step-to-step fluctuations in $PR_G$, $m_{LB}$, and $\sigma_{intra}$; 
the overall trend follows coherent expansion (CE), but it does not exhibit neural resonance.

\begin{figure*}[t]
    \centering
    \includegraphics[width=1\textwidth]{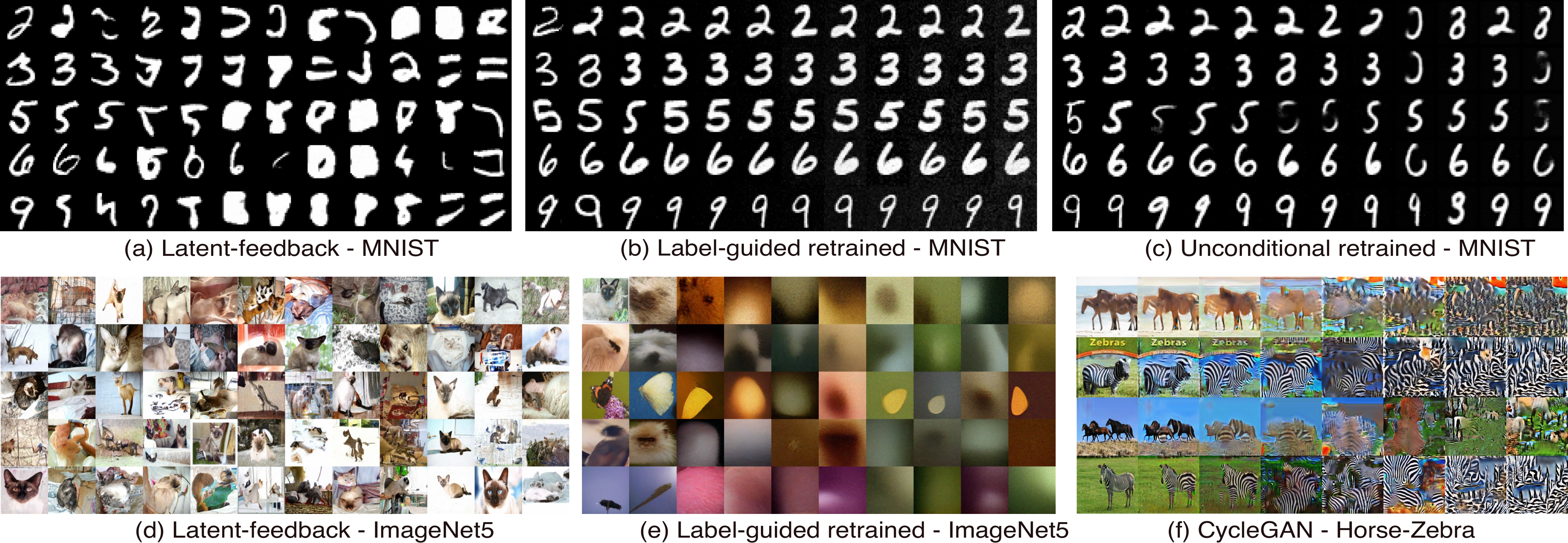}
    \caption{\textbf{Evolution of class exemplars under iterative feedback.} 
    Columns: same-generation samples.
    (a–c) MNIST: latent-feedback progressively loses semantic coherence; 
    label-guided and unconditional retraining retain class identity 
    but converge to repetitive templates owing to high compressibility. 
    (d–e) ImageNet-5: both regimes collapse, but differently: latent-feedback preserves 
    coarse class cues while producing entangled objects; label-guided retraining loses most class semantics.
    (f) CycleGAN (non-ergodic): trajectories settle into distinct attractor
    basins between the two domains. Random examples from the 
    same class are shown for each scenario.}
    \label{fig:result-examples}
\end{figure*}

\subsection{Data Compressibility}

We find that generational dynamics are sensitive to data compressibility. 
On MNIST, a highly compressible dataset, label-guided and unconditional retraining 
quickly contract the latent manifold yet retain task-relevant semantics, see Fig.~\ref{fig:result-examples}(b–c). 
The outcome is increasing repetitiveness rather than outright semantic failure. 
In the latent-feedback setting on MNIST (Fig.~\ref{fig:result-examples}-a), semantics persist even longer: 
the chain drifts slowly toward collapse, with 
fine details gradually eroding while digits remain 
recognizable over many generations. 
By contrast, on a diversity-demanding dataset (ImageNet-5), 
semantic coherence collapses rapidly: within five feedback rounds 
the samples lose most object semantics, and the chain converges to 
low-entropy textures or generic color blobs despite identical 
sampling budgets and optimization settings (Fig.~\ref{fig:result-examples}-e).

Taken together, these results demonstrate that collapse is not an artifact of a particular feedback 
regime: across regimes, compressibility primarily governs the timeline 
and the failure mode: highly compressible data drift into repetition, 
whereas diversity-demanding data undergo early semantic erosion.

\section{Discussion}

We have shown that iterative generative feedback creates a unifying behavior, \emph{neural resonance},
in which representations converge to a low-dimensional invariant manifold when (and only when) directional contraction 
and ergodicity are both present. 
The resulting dynamics exhibit a transient–to–stationary progression 
that we diagnose in practice with complementary drift measures (local and cumulative FID). 
To interpret how manifold geometry evolve, we introduced a compact eight-pattern taxonomy linking 
local and global dimensional behavior; across regimes we observe five of these patterns. 
We also discuss non-ergodic feedback systems that switch among multiple modes rather than settling into a single invariant subspace, 
and therefore do not exhibit neural resonance. 
Data compressibility emerges as an important driver of outcomes: highly compressible data drift toward repetition 
while retaining semantics longer, whereas diversity-demanding data undergo early semantic erosion. 
Taken together, these risks create a practical asymmetry: models trained early on cleaner corpora/datasets 
enjoy a first–mover advantage, whereas downstream learners trained on synthetic–heavy 
mixtures face distribution shift, loss of rare concepts, and accelerated degeneration.


\section{Methods}\label{sec:methods}

We study five iterative-feedback scenarios: 
(i) a functional analogue of Lucier’s feedback loop, 
implemented by repeatedly filtering audio through 
measured impulse responses of physical spaces~\cite{abel2012luciverb}; 
(ii) CycleGAN, where pre-trained image-to-image translators 
are reapplied in a loop across domains~\cite{zhu2017unpaired}; 
and three diffusion-based settings on MNIST and ImageNet-5\textemdash 
(iii) latent-feedback, where diffusion model is conditioned on 
latent image representations, (iv) label-guided retrained, where diffusion model is 
conditioned on class-labels, 
and (v) unconditional retrained models, where no conditional information is used. 
In all cases, we view one application of the operator as a `generation'
and compose it over $n{=}0,\dots,N$.

First two experiments, Lucier's functional analogue and CycleGAN serve as non-ergodic chains, 
and the rest three experiments are ergodic chains. 
The \emph{unconditional retrained} model uses no 
conditioning and is trained \emph{from scratch} 
at each generation on samples produced by the previous generation. 
The \emph{label-guided retrained} model conditions 
on class labels via the standard conditional-diffusion 
interface and is likewise retrained from scratch 
each generation on the newly generated dataset. 
The \emph{latent-feedback} model is trained \emph{once} 
on the original dataset and then held fixed; 
conditioning is provided by $n$-dimensional 
feature vectors extracted from a classifier 
(locally trained for MNIST; Inception-V3 for ImageNet-5). 
Only the conditioning inputs—i.e., the images used 
to compute features—participate in the feedback loop, 
while the diffusion backbone remains unchanged. 
This design isolates the role of conditioning: 
retrained models evolve parameters across generations, 
whereas latent-feedback applies a fixed generator 
driven by iteratively updated conditioning signals.
All diffusion-based experiments operate 
under matched sample budgets and 
sampler schedules.

\subsection{Lucier's Feedback Experiment}

We do not physically reproduce Lucier’s sonic art piece; 
instead, following Abel et al.~\cite{abel2012luciverb}, 
we implement a \emph{functional analogue} of the process 
as a time–homogeneous Markov operator. 
Specifically, let $h$ denote an acoustic 
impulse response (IR) of a physical space and $x$ an audio signal. 
One generation applies the linear time–invariant mapping
\begin{equation}
    X_{n+1}\;=\;T_h(X_n)\;=\;\mathrm{norm}\!\bigl(X_n * h\bigr),
\end{equation}
where $*$ denotes convolution and $\mathrm{norm}(\cdot)$ rescales the output 
to a fixed reference level to avoid trivial decay or blow-up. 
This induces a deterministic, time-homogeneous kernel $K(X,\cdot)=\delta_{T_h(X)}$; 
because no stochasticity is injected, the chain lacks 
one-step positive density and serves as a non-ergodic comparator in our study.

To instantiate $T_h$, we selected IRs from \emph{ten} 
distinct physical spaces in the OpenAIR corpus~\cite{carslaw2012openair} 
and iteratively filtered a continuous reading of \emph{War and Peace} (Leo Tolstoy) 
for $N{=}50$ generations per IR. 
Audio was converted to mono, amplitude-normalized, 
and convolved with each IR (FFT-based implementation); 
outputs were re-normalized at every generation and 
fed back as inputs to the next. 
We emphasize that this is \emph{not} Lucier’s room–record/replay loop, 
but a fixed-operator analogue that captures the same feedback motif. 
Consistent with its non-ergodic nature, 
this system does not exhibit neural resonance in our sense: 
trajectories approach absorptive attractors determined by 
the filter, rather than a unique, noise-smoothed stationary distribution.

\subsection{CycleGAN Experiment}

For image–to–image translation we use a pre-trained 
CycleGAN~\cite{zhu2017unpaired} as operator $T$ in an 
iterative loop. CycleGAN comprises two generators, $F_{A\to B}$ 
and $F_{B\to A}$, that map between domains $A$ and $B$ 
(here, \emph{Horse} and \emph{Zebra}). 
Starting from each domain separately, we define a time-homogeneous update by composing the translators:
\begin{equation}
    X^{(A)}_{n+1}\;=\; F_{B\to A}\!\big(F_{A\to B}(X^{(A)}_{n})\big), 
\qquad
X^{(B)}_{n+1}\;=\; F_{A\to B}\!\big(F_{B\to A}(X^{(B)}_{n})\big),
\end{equation}
and iterate for $N{=}200$ generations in each direction 
(Horse$\to$Zebra$\to$Horse and Zebra$\to$Horse$\to$Zebra). 
Inputs are drawn from the standard Horse$\leftrightarrow$Zebra dataset; 
images are resized to the model’s native resolution 
and intensity-scaled to the network’s input range before each application. 
With weights fixed and no stochasticity injected, this induces a deterministic, 
time-homogeneous Markov kernel that 
lacks one-step positive density and is therefore non-ergodic.

Empirically, the loop does not exhibit neural resonance: 
trajectories tend to settle into attractor basins (texture/stripe templates) 
that depend on the starting image and domain, yielding apparent drift 
plateaus that reflect absorption (as described in Shumailov et al.~\cite{shumailov2024ai}) rather than convergence to a unique, 
noise-smoothed stationary distribution. 
Full manifold-geometry curves are reported in the Supplementary Materials.

\subsection{MNIST Experiments}

All MNIST experiments (latent–feedback, label–guided retrained, unconditional retrained) 
use the \emph{same} diffusion backbone and training recipe, 
following Dhariwal and Nichol’s U-Net design \cite{dhariwal2021diffusion}. 
We employ a 64–channel U-Net with self–attention at $14\times14$ and $7\times7$ spatial resolutions,
trained for 200,000 optimizer steps with batch size 256 using Adam. 
Unless otherwise noted, we use 1,000 diffusion steps with a linear noise schedule 
and the same sampler schedule across scenarios. The image resolution is 
MNIST native ($28\times 28$, single channel); 
all other hyperparameters are matched across scenarios (full details in Code).

\noindent \textbf{Retrained diffusion, label–guided and unconditional: }
For the \emph{unconditional retrained} model, no conditioning is used. 
For the \emph{label–guided retrained} model, class labels provide 
conditioning via the standard conditional–diffusion interface. 
In both cases, 
at generation $n$ we train a fresh model $\theta_{n+1}$ from scratch solely 
on the synthetic dataset produced by the previous generation, $D_n$, for a total of $N{=}100$ iterations; 
no real data are mixed in subsequent generations. 
Each generation we sample 5,000 images per class (50,000 total) 
to form $D_{n+1}$, which then becomes the training set for subsequent generation. 
Sampler steps, schedules, and seeds are held fixed across generations.

\noindent \textbf{Latent–feedback diffusion: }
For the \emph{latent–feedback} model, 
a single conditional diffusion model $\theta_0$ is trained once on the MNIST training set 
and then \emph{frozen}. Conditioning signals are $n$–dimensional 
feature vectors extracted from a pre–trained MNIST classifier, 
specifically from a high-level convolutional layer before global pooling. 
Let $f(\cdot)$ denote this feature map. At generation $n$, 
given conditioning images $\{x^{(i)}_n\}$, we compute $c^{(i)}_n{=}f(x^{(i)}_n)$ 
and generate $x^{(i)}_{n+1}\sim p_\theta\big(\,\cdot \mid c^{(i)}_n\big)$,
producing one image per conditioning vector. 
The first generation uses the 10,000 MNIST \emph{test} images 
as conditioning inputs (yielding 10,000 generated images); 
thereafter, only the \emph{conditioning images} participate in the feedback loop for next 250 iterations: 
the synthetic set from generation $n{+}1$ becomes the conditioning set for generation $n{+}2$, 
while the diffusion backbone and all sampling hyperparameters remain unchanged. 
This design isolates feedback to the conditioning stream, 
in contrast to the parameter–evolving retraining setups. 
Additional training and sampling details, including exact 
feature dimensionality and classifier architecture, are provided with the code.

\subsection{ImageNet-5 Experiments}

Both ImageNet-5 experiments\textemdash latent–feedback and 
label–guided retrained diffusion\textemdash use the \emph{same} backbone 
and training recipe, following the U-Net design of Dhariwal and Nichol~\cite{dhariwal2021diffusion}.
We employ a 128–channel U-Net with self–attention at 
spatial resolutions $32\times32$, $16\times16$, 
and $8\times8$, trained for 300,000 optimizer steps with batch 
size 40 using Adam. Unless noted, we use 1,000 diffusion steps 
with a linear noise schedule and identical sampler settings across scenarios. 
Images are processed at $128\times128\times3$ with standard normalization. 
Hyperparameters, augmentation, and optimizer details are 
matched across scenarios.

\noindent \textbf{Label–guided retrained diffusion: }
The \emph{label–guided retrained} model conditions on class 
labels via the standard conditional–diffusion interface and 
follows an \emph{evolving–operator} protocol: at generation $n$, 
a fresh model $\theta_{n+1}$ is trained \emph{from scratch} 
solely on the synthetic dataset produced by the previous generation, $D_n$. 
Each generation we sample 50,000 images (5,000 per class) to form $D_{n+1}$, 
which becomes the training set for subsequent generation ($N{=}10$). 
No real images are mixed after $n{=}0$. 
Sampler schedules, diffusion steps, and seeds are held fixed across generations.

\noindent \textbf{Latent–feedback diffusion: }
The \emph{latent–feedback} model is trained \emph{once} on 50,000 ImageNet-5 training images 
and then frozen; feedback acts only through the conditioning stream. 
Conditioning vectors are extracted with a pre–trained Inception–V3: 
for an image $x$, we compute a high–level feature $c{=}f(x)$. 
For sampling at generation $n{=}0$, we randomly select 10,000 images (2,000 per class) 
to form the conditioning set $\{c^{(i)}_0 {=} f(x^i)\}$ and generate one sample per conditioning vector.
Thereafter, the \emph{synthetic} images from generation $n$ are fed back to 
the feature extractor to produce conditioning vector for generation $n{+}1$ and so on for $N{=}50$, 
while the diffusion backbone $\theta_0$ and all sampling hyperparameters remain unchanged.

\newpage
\appendix

\section{Related Work}\label{sec:related-works}

Recent investigations into iterative feedback loops focus primarily on two 
domains: \emph{language models} 
and \emph{image-generative models}. In both areas, researchers explore two 
training protocols\textemdash fine-tuning and full-retraining. 
\emph{Fine-tuning} studies start  
with a pretrained network and repeatedly train on its own synthetic outputs,
typically uncovering semantic drift or diversity loss. 
By contrast, \emph{full-retraining} approaches initialize fresh models 
from scratch at every generation and optimize them using previous generation's data.
In this section, we revisit both model types.

\subsection{Language Models}

Shumailov et al.~\cite{shumailov2024ai} find model collapse in fine-tuned large-scale language models (LLMs)
in a fully-synthetic iterative feedback loop. An iterative feedback loop arises when successive 
generations of generative models are trained (or fine-tuned) on data 
produced by their predecessors. It is called `fully-synthetic' when the model is trained only with 
the generated data. They formally define collapse as the progressive loss of 
low-probability (tail) events, ultimately converging to a narrow, 
low-variance distribution. The paper traces the root causes 
to (i) \emph{statistical-approximation error}\textemdash finite sampling inevitably drops rare events,
(ii) \emph{functional-approximation error} from limitations of the learning procedure, and 
(iii) \emph{functional expressivity error} from imperfect model capacity.
Together these errors accumulate, causing the distribution to drift progressively
away from the true one.
The authors conclude that even tiny proportions of synthetic data, 
if recursively reused without continual infusion of fresh real samples, 
leads to irreversible forgetting. The authors call for watermarking, 
provenance tracking, and systematic infusion of real data to preserve model quality.

Extending these findings, Dohmatob et al.~\cite{Dohmatob2024scaling} re-examine 
this phenomenon in a simplified setting of kernel regression. 
They show how adding synthetic data changes scaling laws and they propose adaptive regularization 
that partially  mitigates collapse.
They provide a theoretical framework of model collapse through the 
lens of scaling laws and propose an adaptive regularization technique to 
at least partially mitigate model collapse.

Guo et al.~\cite{guo2024curious} probe the language richness 
of LLMs under iterative fine-tuning on their own outputs.
They introduce a three-tier evaluation suite that measures 
lexical, syntactic, and semantic diversities. 
Across all metrics, linguistic variety declines monotonically 
across fine-tuning cycles: vocabulary shrinks, 
syntactic constructions homogenize, and Sentence-BERT 
embeddings cluster more tightly.
They conclude that synthetic-only fine-tuning poses 
a systemic risk of linguistic flattening even 
when overall performance looks healthy, and they recommend diversity-aware objectives 
and routine mixing of fresh human data to avoid long-term erosion of language richness. 

Briesch et al.~\cite{briesch2024largelanguagemodelssuffer} present a
study of iterative feedback loops in language models trained from scratch.
They experiment with a logic-expression dataset\textemdash where correctness can be unambiguously evaluated\textemdash 
to measure both accuracy and diversity across generations. 
They find that while syntactic and semantic correctness of generated 
expressions initially improves over early generations, 
lexical diversity monotonically declines, 
collapsing toward nearly zero diversity within 50 full-synthetic generations. 
They further show that incremental or expanding data cycles\textemdash where a fraction of 
real data is retained or added each round\textemdash can slow but not prevent diversity loss.

\subsection{Image Generative Models}

Model collapse in image synthesis models was  
investigated by Shumailov et al.~\cite{shumailov2024ai}, 
who demonstrated\textemdash with a VAE trained on MNIST\textemdash that repeatedly fine-tuning 
a model on its own samples drives the latent manifold toward degeneration.  
In their experiment, each generation draws fresh latent codes from a Gaussian prior, 
decodes them into images, and then treats these outputs as the sole training 
data for the next iteration. After only a handful of iterations, 
the VAE’s representations collapse toward the latent mean: 
classes blur and become indistinguishable, and the encoder assigns high 
confidence to increasingly featureless digits\textemdash marking a collapse
of both diversity and semantics. 
Since this seminal result, researchers have
studied model collapse under different architectures and training 
schemes \cite{alemohammad2024selfconsuming,gerstgrasser2024is,Hataya2023ICCV,bohacek2023nepotistically,bertrand2024stability,yoon2024model,gillman2024selfcorrecting},
while theoretical work probes its statistical and dynamical underpinnings, 
proposing formal criteria and mitigation strategies \cite{shi2025closer,taori2023datafeedback,fu2024theoretical}.

Alemohammad et al.~\cite{alemohammad2024selfconsuming} coin the term \emph{Model Autophagy Disorder} (MAD) 
to describe this collapse by analogy to mad cow disease.
They analyze three feedback regimes: (i) a \emph{fully-synthetic} loop, 
where each new model is trained only on data from its predecessor; 
(ii) a \emph{synthetic-augmentation} loop, which retains a fixed cache 
of real images in each round; and (iii) a \emph{fresh-data} loop, which injects 
new real samples in every generation. Across multiple models 
and different datasets, 
they show that without sufficient fresh data injection in every generation, 
visual quality (precision) or diversity (recall) collapse within 
only a few iterations. They conclude that any self-consuming 
pipeline lacking fresh data ultimately collapses, 
progressively amplifying artifacts or shrinking support.
Gerstgrasser et al.~\cite{gerstgrasser2024is} arrive at 
a similar conclusion from an error analysis perspective. 
They find that replacing the real corpus with model outputs 
drives test loss upward. By contrast, concatenating each 
synthetic batch with the existing real-plus-synthetic pool 
keeps error stable and preserves both quality and diversity, 
without requiring special regularizers or temperature-sampling tricks.
Bertrand et al.~\cite{bertrand2024stability} introduce a fine-tuning framework 
based on mixed dataset\textemdash composed of real and 
synthetic images\textemdash 
and show that the long-term fate of an iterative feedback loop depends on 
one scalar: the per-round real-to-synthetic ratio $\lambda$. They also highlight that 
the critical $\lambda$-threshold is architecture- and resolution-dependent. 
Together, they establish that 
a competent seed model along with enough real data per round can stabilize model collapse, 
but removing real images drives the process toward degeneration.

Hataya et al.~\cite{Hataya2023ICCV} use large scale datasets like ImageNet and COCO  with diffusion
models to study the impact of generation collapse on future datasets and model performance. 
They replace varying proportions
of real ImageNet or COCO images with generated fakes 
to mimic future web contamination. On the ImageNet classification task, ResNet-50, Swin-S and ConvNeXt-T 
suffer only minor top-1 drops up to 40\% synthetic data, 
but accuracy plummets by 10-15 percentage points at 80\%  
and collapses to 15\%  when training solely on synthetic images. Experiments on a COCO captioning task show the same 
monotonic degradation. For image generation, an ID-DPM~\cite{dhariwal2021diffusion} trained on the mixes exhibits 
rising precision but sharply falling recall, revealing mode contraction. 
Similarly, Yoon et al.~\cite{yoon2024model} also explore model collapse 
in text-to-image generative models, where a pretrained text-to-image diffusion model 
is repeatedly finetuned on its own synthetic outputs (without the addition of real data) 
using a fixed prompt set. They first conduct extensive empirical 
studies across four diverse datasets 
(Pokémon, Kumapi, Butterfly, CelebA-1k), 
varying common hyperparameters to find that 
classifier–free guidance (CFG) scale is by far the dominant factor 
for model collapse. Low CFG (e.g. 1.0) yields smooth 
but blurry low-frequency degradation, high CFG (e.g. 7.5) produces 
saturated, repetitive high-frequency artifacts, 
and an intermediate CFG (dataset-specific, around 2.5 or 1.5) best 
balances fidelity and stability.
Inspired by genetic mutations, they propose ReDiFine (reusable diffusion fine-tuning), 
which combines condition-drop fine-tuning (to inject randomness) with CFG 
scheduling as a plug-and-play remedy. It dramatically reduces collapse 
rate while preserving first-iteration quality without data specific hyperparameter tuning.

Bohacek and Farid~\cite{bohacek2023nepotistically} demonstrate that 
Stable Diffusion v2.1 is highly susceptible to 
iterative feedback: after just five iterations of retraining on its own outputs,
the generated images of human faces
become severely distorted and homogeneous, 
even when the retraining set contains as few
as 3\% synthetic images mixed with 97\% real data. 
Across mixtures from 3 to 100\% synthetic images, FID rises and CLIP 
similarity falls sharply after a brief initial uptick, 
confirming rapid declines in both photorealism and semantic alignment. 
They also perform a healing experiment\textemdash five additional 
epochs on only real images\textemdash and find that it somewhat restores mean FID 
and most of the CLIP score, yet residual artifacts and elevated 
CLIP variance indicate that the damage is only partly reversible.  
They conclude that even minimal contamination of future 
datasets with model-generated content can trigger irreversible 
quality and diversity loss.
Gillman et al.~\cite{gillman2024selfcorrecting} address the instability and collapse by
proposing a self-correction framework designed to stabilize this process. They implement scalable 
self-correction via expert knowledge\textemdash K-means `anchor' projections 
for correcting MNIST-based models 
and a physics-simulator-based corrector for human motion models\textemdash and demonstrate 
on a challenging text-to-motion diffusion task that self-corrected loops 
avoid collapse even at 100\% synthetic data, 
consistently outperforming uncorrected loops.
Their results suggest 
that a reliable correction function 
can safeguard generative training against the degenerative effects 
of synthetic-only retraining.

Our work pushes the study of model collapse beyond 
conventional fine-tuning and full-retraining loops by dissecting 
three complementary feedback regimes through the lens of Markov processes.  
(i) In latent-feedback models, 
a frozen, pretrained classifier supplies $n$-dimensional  
feature vectors that serve as high-level conditioning for 
the generative model at every generation.  
(ii) Label-guided retraining forgoes pretrained features: 
each new generator is trained from scratch, conditioned only on class labels, 
allowing us to isolate the effect of coarse-grained supervision.  
(iii) Unconditional retraining removes all guidance, exposing the dynamics 
of a purely self-referential pipeline. By contrasting these settings we reveal 
how the type and granularity of conditioning govern the speed, form, 
and the inevitability of collapse under fully synthetic-data conditions.
Finally, we interpret collapse through the notion of \emph{neural resonance}, 
defined as the emergence of a low-dimensional invariant subspace that remains 
unchanged under repeated generative updates; our analysis shows 
that each feedback regime converges toward such a resonant subspace 
at a characteristic rate, providing a unifying explanation 
for the observed patterns of degradation.

\section{Datasets \texorpdfstring{$\&$}{and} Metrics}\label{sec:dataset-and-metrics}

This section details the datasets employed in our experiments and outlines the evaluation metrics 
used to probe the Markov chains. 

\subsection{Datasets}\label{subsec:dataset}

We evaluate our framework on three datasets: MNIST 
(Modified National Institute of Standards and Technology)~\cite{lecun2010mnist},
ImageNet-5, and OpenAIR~\cite{carslaw2012openair}.

The MNIST  dataset is a standard benchmark of 70,000 
grayscale 28$\times$28 images of handwritten digits (0-9). 
The dataset is divided into a training set of 60,000 
and a test set of 10,000 images.

ImageNet-5 is a custom, five-class subset of ImageNet \cite{russakovsky2015imagenet},  
created by leveraging its hierarchical label structure.
We merged related ImageNet synsets into five 
coarse categories\textemdash Cat, Terrier Dog, Insects, Monkey, and Wading-bird, 
each containing $\sim$10,000 color (RGB) images of resolution $128 \times 128$ pixels. 
The exact synset mappings needed to reproduce this subset are included with our code.

To emulate Lucier’s feedback loop, we convolved a continuous narration of \emph{War \& Peace} \cite{tolstoy2018war} 
with room-impulse responses from ten acoustically distinct spaces selected from the OpenAIR corpus \cite{carslaw2012openair}. 
Each impulse response defines a distinct class, yielding ten sound categories.
The full list of rooms and their corresponding impulse-response files are distributed with our code.

\subsection{Quantitative Metrics}\label{subsec:metrics}

We quantify the empirical behavior of each Markov chain using the \emph{Fréchet Inception Distance} (FID). 
The distance from one iteration (or generation) to the next measures \emph{local drift}, whereas distance 
from the original generation to a later generation measures \emph{cumulative drift} from the data manifold; 
the joint trajectory of these two curves provide evidence 
of empirical stationarity and the onset of neural resonance.
To probe the accompanying changes in latent geometry we employ three complementary 
statistics: (i) the within-class spread $\sigma_{intra}$, which measures class-specific spread; 
(ii) the Levina–Bickel intrinsic dimensionality estimate $m_{LB}$~\cite{Levina2005}, 
a proxy for local degrees of freedom; 
and (iii) the global participation ratio $PR_G$, which reflects 
the effective dimension of the entire feature cloud.
We now explain these metrics in detail.

\begin{figure*}[t]
    \centering
    \includegraphics[width=1\textwidth]{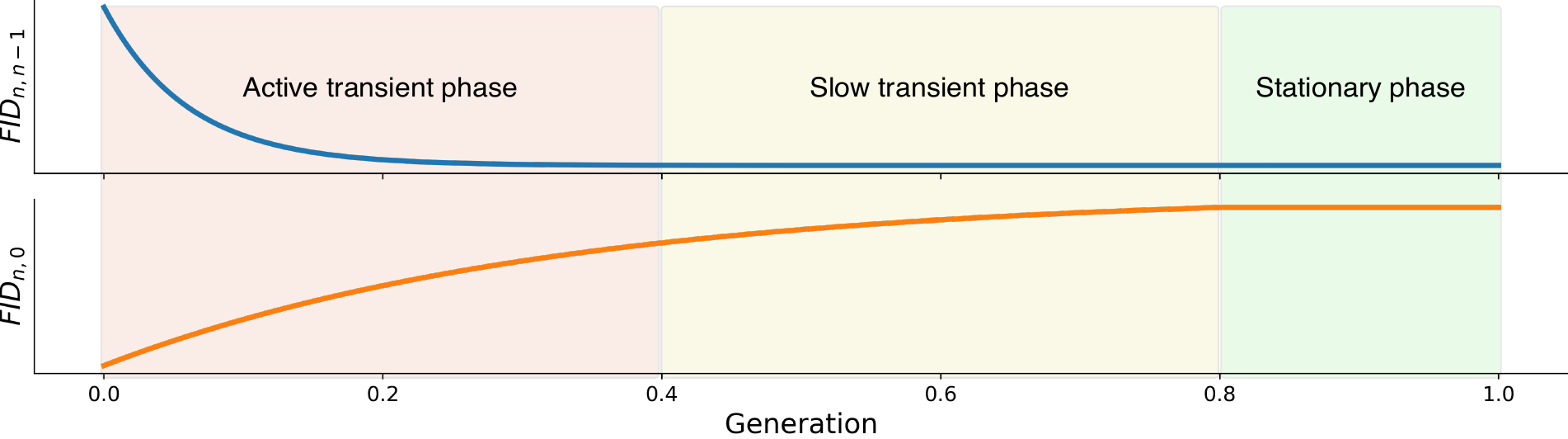}
    \caption{\textbf{A idealized example of local and cumulative drifts in generational Markov chains.} 
    The blue curve ($FID_{n,n-1}$) shows local drift between successive generations, 
    and the orange curve ($FID_{n,0}$) shows cumulative drift from the original data. 
    When both curves have steep slopes, the chain is in an \emph{active transient} phase; 
    when one curve flattens and the other changes slowly, it is in a \emph{slow transient} phase; 
    and when both plateau, the system has reached empirical stationarity.}
    \label{fig:drift-demo}
\end{figure*}
\subsubsection{Fréchet Inception Distance (FID)}

FID is the 2-Wasserstein distance between two Gaussians fitted to 
embedded features of two sample sets $S_r$ (real) and $S_g$ (generated),
\begin{equation}
    \text{FID} = \lVert \mu_r - \mu_g \rVert^2 + Tr \left( \Sigma_r + \Sigma_g - 2(\Sigma_r \Sigma_g)^{1/2} \right),
    \label{eq:fid}
\end{equation}
where $\mu_r, \Sigma_r$ and $\mu_g, \Sigma_g$ are the mean vectors and covariance matrices of the real and generated distributions, respectively.
The embeddings are extracted 
from a pre-trained network (usually Inception-V3~\cite{heusel17a} or a locally trained classification network).
Lower values indicate closer agreement between
real and generated distributions in latent space.

In our experiments, 
we compute FID scores across distributions generated at successive iterations. 
As an example,  suppose that $X_0$ is the original data distribution (real images) 
and $X_1, X_2, \dots X_N$ are the 
generated distributions at each generation $n$, i.e., the output 
of the generative samples obtained after the $n^{th}$ iteration. 
We estimate the FID score at the classification layer of Inception-V3 for ImageNet-5, 
and at convolution layers of pretrained networks for MNIST.
We track two complementary scores (Fig.~\ref{fig:drift-demo}):
\begin{enumerate}
    \item \textit{Local-drift} ($\operatorname{FID}_{n,n-1} = \operatorname{FID}_{X_n, X_{n-1}}$) measures 
      how much each new generation differs from the last.
    Smaller values indicate low drift across generations, consistent with local stationarity.
    \item \textit{Cumulative-drift} ($\operatorname{FID}_{n,0} = \operatorname{FID}_{X_n, X_0}$) estimates 
      the cumulative distance traveled from the original distribution.
\end{enumerate}
These two scores together
provide evidence consistent with empirical stationarity. As shown in Fig.~\ref{fig:drift-demo}, 
if both $\operatorname{FID}_{n,n-1}$ 
and $\operatorname{FID}_{n,0}$ have large negative/positive slope,  
the chain is in an \emph{active transient} phase; if one 
flattens while the other changes slowly, the chain is in a \emph{slow transient} phase;
if both flatten, the evidence points to empirical \emph{stationarity}.

\subsubsection{Intra‐Class Spread \texorpdfstring{$(\mathbf{\sigma_{intra}})$}{sigma-intra}}  
Intra-class root-mean square Euclidean deviation~\cite{fisher1936use}, or 
simply \emph{intra‐class spread}, measures the average 
local spread of latent representations 
within each semantic category, reflecting how spread or compact individual class clusters are. 
Formally, for $C$ classes with samples $X_c\subset\mathbb{R}^D$ and centroids $\mu_c$ of latent feature $x_L$, 
\begin{equation}
\sigma_{\mathrm{intra}}
= \frac{1}{C}\sum_{c=1}^{C}
  \sqrt{\frac{1}{|X_c|}\sum_{x\in X_c}\bigl\lVert x - \mu_c\bigr\rVert^2},\, \quad \text{where }\; \mu_c = \frac{1}{|X_c|}\sum_{x\in X_c} x_L.
  \label{eq:sigma-intra}
\end{equation}
Here, the term under the square‐root 
is the mean‐squared deviation of class-$c$ points from their centroid, 
and averaging over classes yields a single statistic capturing the typical cluster radius. 
$\sigma_{intra}$ provides a robust measure of local cluster 
behavior within each category, offering a gauge of 
how iterative processing reshapes class-specific features over time.

\noindent \textit{Behavior of $\mathrm{\sigma_{intra}}$:} 
It offers a granular view of how neural resonance reshapes the manifold 
in the vicinity of each data point.  
An increase in $\sigma_{intra}$ indicates \emph{local 
semantic expansion}, where cluster points diverge 
along latent axes across generations.
A decrease indicates \emph{semantic contraction}, 
where clusters tighten over generations.

\subsubsection{Intrinsic Dimensionality \texorpdfstring{$(\mathbf{m_{LB}})$}{mLB} }
Intrinsic dimensionality (Levina–Bickel Estimator~\cite{Levina2005}) 
captures the number of `free' directions in which data locally vary, i.e., 
the dimensionality of the manifold on which the high-dimensional points really lie. 
In a smooth latent manifold, each point $x_i$ lives in a tiny patch that is well‐approximated 
by an $m$-dimensional Euclidean neighborhood, so that the number of nearby samples 
within radius $r$ scales like $r^m$ $(Volume_{m}(r) \propto r^m)$. 
The expected number of points within radius $r$ is proportional to $r^m$;
Levina and Bickel’s method harnesses this scaling law by examining 
the ratios of the $k$-th nearest‐neighbor distance $T_i(k)$ to the distances $T_i(j)$ of the first $k\!-\!1$ neighbors,
\begin{equation}
  m_{\mathrm{LB}}=\frac{1}{n}\sum_{i=1}^n\hat m_i,  \,\quad \mbox{where}\,\,\,
\hat m_i \;=\; \biggl[\frac{1}{k-1}\sum_{j=1}^{k-1} \ln\frac{T_i(k)}{T_i(j)}\biggr]^{-1}.
\label{eq:levina-bickel}
\end{equation}
Here $n$ is the number of samples, $\hat m_i$ is the estimated dimension around point $i$, and $k$ is the neighbor count.
Averaging over all observations $n$ of a data cluster provides 
an estimate of local dimensionality. 
Though sensitive to number of neighbors $k$, the intrinsic dimensionality 
quantifies the manifold’s \emph{effective} number of independent axes: 
it is the local `degrees of freedom' available for data variation, 
and averaging $\hat m_i$ (Eq.~\ref{eq:levina-bickel}) over all points delivers a robust measure of 
the local latent representation’s geometric complexity. 

Intrinsic dimensionality is sensitive to folds\textemdash here 
termed `\emph{wrinkles}'\textemdash in the latent manifold. 
A \emph{wrinkle} is a localized, 
low‐amplitude perturbation that introduces subtle new axes of variation 
within a small neighborhood, thereby raising local dimensionality 
without substantially altering the global structure.
Geometrically, this represents small-scale deviations 
from the manifold’s smooth, low-curvature core\textemdash 
localized folds or nonlinearities that increase dimensionality in a restricted neighborhood. 
Rather than altering the manifold’s broad, global shape, 
wrinkles inject fine‐grained ripples that raise 
its local dimensionality without changing its overall coarse geometry.

\noindent \textit{Behavior of $\mathbf{m_{LB}}$: }
A decline in intrinsic dimensionality denotes semantic contraction, 
reflecting the manifold’s collapse onto a diminishing set of principal axes. 
In contrast, an increase in intrinsic dimensionality signals either 
genuine expansion into previously unoccupied subspaces or the emergence 
of localized `wrinkles' 
that transiently elevate neighborhood‐level degrees of freedom even amid local semantic contraction.

\subsubsection{Participation Ratio (PR)}  
The participation ratio quantifies the \emph{effective} number of principal axes 
that capture latent variance, serving as an alternate measure of manifold
dimensionality~\cite{halsey1986fractal,eckmann1988multifractal}.  
Given the eigenvalues $\{\lambda_i\}_{i=1}^D$ of the empirical covariance 
matrix of the latent features, participation ratio is,
\begin{equation}
\mathrm{PR}
= \frac{\bigl(\sum_{i=1}^D \lambda_i\bigr)^2}
       {\sum_{i=1}^D \lambda_i^2}\,.
\label{eq:pr}
\end{equation} 
Here $D$ is the total number of eigenvalues, typically $D$=$n$.
If variance is concentrated in a single direction, $\mathrm{PR}=1$; 
if it is evenly distributed across $k$ orthogonal 
directions, $\mathrm{PR}=k$.

The participation ratio can be estimated at different levels for an image distribution\textemdash
either at the class level, considering samples within each category, or at the global level, 
considering the entire distribution. In this work,
we compute global participation ratio $(\mathrm{PR}_G)$ from the 
full eigen-spectrum of latent features computed across the entire dataset.
$PR_G$ is a hyperparameter‐free, 
robust estimate of the manifold’s global `degrees of freedom'.

\noindent \textit{Behavior of $PR_G$:}

An increase in $PR_G$ denotes \emph{global expansion}, 
implying that variance has become more uniformly spread across a larger number of principal axes.
Conversely, a decrease in $PR_G$ signals \emph{global contraction}, as the latent representation progressively 
collapses onto a smaller subset of dominant directions. 
Participation ratio remains effectively invariant under small‐magnitude `wrinkles,' 
changing only when local perturbations inject enough 
variance into new directions to alter the global covariance spectrum.

\noindent \textit{Relationship between  $PR_G$ and $m_{LB}$: }
Although both metrics quantify dimensionality of a latent space, 
they capture complementary aspects of latent-space geometry.
$PR_G$ is derived from the full eigen-spectrum of the covariance matrix and therefore 
reflects the \emph{global} distribution of second-moment variance: if most variance is 
concentrated along only a few axes, $PR_G$ will be small even when the manifold 
locally meanders through many minor directions. By contrast, $m_{LB}$ is a maximum likelihood estimation 
based on how neighbor distances scale with radius and is thus sensitive to fine-scale 
curvature or wrinkles that may add local directions without materially affecting $PR_G$. 

The two metrics coincide on linear or nearly linear subspaces with uniformly 
distributed samples\textemdash such as a well-spread Gaussian blob\textemdash because under those conditions local 
scaling and global second moments capture the same structure, rising or falling together. 
They diverge, however, on strongly nonlinear manifolds. 
In short, $PR_G$ answers ``how many axes carry most of the variance for the whole distribution?''
whereas $m_{LB}$ asks ``how many directions does the manifold reveal locally?'' 
Used together, the two measures provide a fuller picture of latent-space 
structure than either can supply alone.

\begin{figure*}[t]
    \centering
    \includegraphics[width=1\textwidth]{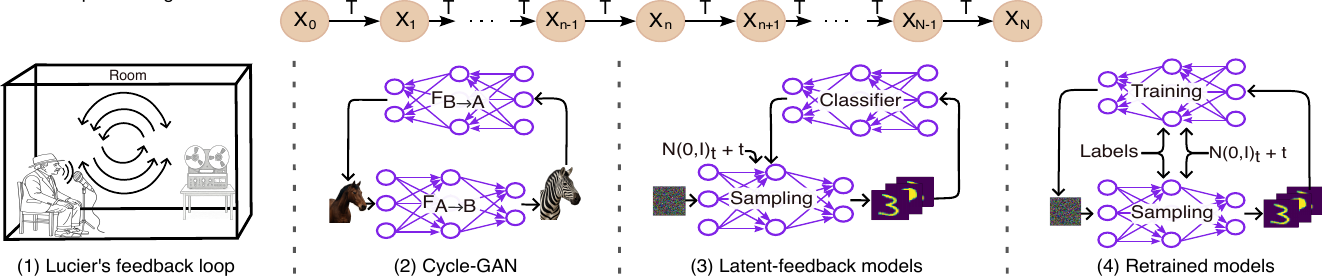}
    \caption{\textbf{A high-level representation of the Iterative feedback process.} $X_n$ 
    represents the current distribution of images at generation $n$ and $T(\cdot)$ is the 
    transformation operator. The framework unifies five feedback settings\textemdash 
    Lucier’s acoustic experiment, cyclic image translation (CycleGAN), latent-feedback diffusion, 
    and two retrained diffusion models (label-guided and unconditional)\textemdash each representing
    a distinct form of iterative feedback. Together they illustrate how successive generations of 
    models form a Markov process and their shared dynamics.}
    \label{fig:unified-generational-chain}
\end{figure*}
\section{Generative Feedback as a Markov Chain: Expanded}\label{sec:gmc}

We begin with an initial distribution of images ($X_0$). Repeated application 
of a transformation operator $T$ yields successive distributions $\{X_0, X_1, X_2,\dots\}$, 
where each generation depends only on its predecessor. This naturally defines a Markov 
chain across iterative generations.

We refer to this process as a \emph{generational Markov chain }(GMC).
Its characteristics are determined entirely by the properties of the operator $T$.  
If $T$ injects randomness, the chain explores broadly 
before settling into a stationary distribution; if $T$ is deterministic or lacks randomness,
the chain instead converges to a fixed point or local attractor. In this section, we present a detailed analysis 
of different iterative feedback frameworks shown in Fig.~\ref{fig:unified-generational-chain}
and explore their characteristics as a GMC.

Formally, let $X_0$ denote the initial distribution\textemdash an audio waveform 
in Lucier's feedback loop or an image distribution in generative models. The operator 
$T$ acts iteratively on this domain, mapping $T\colon \mathcal{X}\to\mathcal{X}$, where,
\begin{equation*}
    \mathcal{X} =
    \begin{cases}
    \mathbb{R}^d
      & \textit{Lucier’s Feedback}\\
    \mathbb{R}^{H\times W\times C}, 
      & \textit{CycleGAN}\\
    \mathbb{R}^{H\times W\times C}, 
      & \textit{Latent-Feedback}\\
    \{\,x \sim p_{\mathrm{data}}(x)\mid x \in \mathbb{R}^{H\times W\times C}\}, 
      & \textit{Uncond-Retrained}\\
    \{(x,y)\sim p_{\mathrm{data}}(x,y)\mid x\in\mathbb{R}^{H\times W\times C},\,y\in\{1,\dots,K\}\}.
      & \textit{Cond-Retrained}
    \end{cases}
    \label{eq:operator-details}
\end{equation*}
In this context, $\mathbb{R}^{H\times W\times C}$ represents the space of images 
with spatial dimensions $H\times W$ and $C$ color channels. 
Here, $x \in \mathbb{R}^{H\times W\times C}$ denotes a single image sample. 
and $y$ represents 
its associated class label with $K$ distinct classes. 
The distribution $p_{\mathrm{data}}(\cdot)$ characterizes the empirical 
probability distribution of the dataset.
For each iterative step $X_{n+1} = T (X_n)$, operator $T$ is
\begin{equation}
    \begin{split}
        T &=
        \left\{
        \begin{array}{@{}l@{\quad}l@{\quad}l@{}}
        h * (\cdot), 
          & \textit{Lucier’s Feedback Loop}\\[2pt]
        F_{A\to B}\bigl(F_{B\to A}(\cdot)\bigr), 
          & \textit{CycleGAN}\\[2pt]
        D(\cdot\mid f(\cdot)), 
          & \textit{Latent-Feedback Models}\\[2pt]
        \mathrm{Sample}(\mathrm{Train}(\cdot)), 
          & \textit{Un/Conditional Retrained Models}
        \end{array}
        \right.
    \end{split}
\end{equation}
where $h$ denotes a convolutional kernel applied iteratively in 
the acoustic feedback scenario \cite{abel2012luciverb}; 
$F_{A\to B}(F_{B\to A}(\cdot))$ constitutes a cyclic operator implemented 
via paired generative adversarial networks (GAN) over two domains $A$ and $B$, $F_{A\to B}$ and $F_{B\to A}$; 
$D(\cdot\mid f(\cdot))$ symbolizes latent-feedback sampling 
with a pre-trained generative diffusion model $D$ guided by 
features extracted from a pre-trained classification network $f(\cdot)$; and finally, 
$\textit{Sample}(\textit{Train}(\cdot))$ describes an iterative 
process of repeated model retraining and subsequent DDPM (denoising diffusion probabilistic models) 
sampling \cite{ho2020denoising}. 
This framework captures the common mechanism that underlies diverse 
collapse scenarios\textemdash including the `model collapse' phenomenon~\cite{shumailov2024ai}, 
which we refer to as \emph{generational collapse}\textemdash and, under the conditions 
detailed in Section \ref{sec:neural-resonance}, unifies them within the concept of \emph{neural resonance}.

Generational collapse occurs when repeated application of a 
transformation operator amplifies structural or semantic preferences. 
At the same time, it suppresses semantic variance in the generated distribution. 
Whether arising from architectural or model biases, or 
from retraining-induced biases, 
each iteration pushes the generated distribution of GMCs closer to a low-dimensional attractor state by
altering the effective feature support of the output distribution in latent space. 
Ultimately, the GMCs converge on this attractor state, producing distributions that 
remain unchanged under further iterations.

\subsection{Proof of Markov Property}\label{subsec:as-markov-chain}

We model this iterative framework as a Markov process
over generations.
Figure~\ref{fig:unified-generational-chain} illustrates a generational Markov chain, 
where the repeated application of the operator $T(\cdot)$ yields $X_{n+1} = T(X_n)$. 
The diagram depicts this progression: from the initial distribution $X_0$ 
through successive generations $X_1$, $X_2$, $\dots$, ultimately approaching 
a stationary distribution $X_N$.  

All four experimental settings\textemdash functional analogue of Lucier's feedback loop, 
CycleGAN translation, latent-feedback diffusion, and retrained diffusion\textemdash 
fit this framework as iterative operators acting on a state space $\mathbb{E}$.
We analyze the \emph{Markovian} nature of the iterative framework, i.e., 
a process in which the next state depends only on the present state, not the 
full history. Formally, a Markov chain is a stochastic 
process $\{X_n\}$ that satisfies the property:
\begin{equation}
    P(X_{n+1}\mid X_n,X_{n-1},\dots,X_0) = P(X_{n+1}\mid X_n).
    \label{eq:markov-property}
\end{equation}

\noindent \textbf{Lucier's feedback loop:}
For the functional analogue of Lucier's feedback loop, let the audio waveform 
at iteration $n$ be $X_n$. The update is 
\begin{equation}
    X_{n+1} = T(X_n) \Rightarrow  X_{n+1} = h*(X_n),
\end{equation}
where $h$ is a fixed convolution kernel and $*$ denotes the one-dimensional convolution operation. 
Since $T(\cdot)$ is fixed, the Markov property holds directly. 

\noindent \textbf{CycleGAN:}
For CycleGAN, an image at generation $n$ is mapped as 
\begin{equation}
    X_{n+1} = T(X_n) \Rightarrow X_{n+1} = F_{A \to B} (F_{B \to A}(X_n)),
\end{equation}
where $F_{A \to B}: A\to B$ and $F_{B \to A}: B\to A$ are fixed neural network mappings 
and $X_n$ is an image sample.
Since the operator is deterministic, the Markov property is satisfied.

\noindent \textbf{Latent-feedback models:}
For latent-feedback diffusion, each iteration samples from a fixed diffusion model 
$\mathcal{D}$, conditioned on features from a deterministic extractor $f(\cdot)$:
\begin{equation}
X_{n+1} = T(X_n) \Rightarrow X_{n+1} = \mathcal{D}(\cdot \mid f(X_n)),
\end{equation}
where, $X_n$ is a distribution. Because each step depends only on the previous latent state, the process is Markovian.

\noindent \textbf{Retrained diffusion models:}

In retrained diffusion models, each new distribution $X_{n+1}$ depends 
not only on $X_n$ but also on the model parameters $\mathcal{D}_{n+1}$. 
Thus, at the distribution level, the Markov property does not hold. 
However, if we expand the state to include both the parameters and samples\textemdash
$(\mathcal{D}_n,X_n)$\textemdash the chain becomes Markovian:
\begin{equation}
    P((\mathcal{D}_{n+1}, X_{n+1})\mid(\mathcal{D}_n,X_n),\dots) = P((\mathcal{D}_{n+1}, X_{n+1})\mid(\mathcal{D}_n,X_n)).
\end{equation}
Hence retrained models admit a Markovian representation in the joint parameter-sample space.
The operator $T$ can be applied at a sample level or at distribution level; for simplicity, we 
treat $X_n$ as a distribution hereafter. 

\subsection{Ergodicity of GMCs: Sketch Proof}
\label{subsec:gmc-ergodic-intuition}

Before giving the full technical proof in \S3.3, we sketch the main ideas behind the
ergodic behavior of generational Markov chains. In what follows, $X_n$ is 
considered as a distribution, unless stated otherwise. We provide the sketch proof 
and the expander proof in \S3.3 
for diffusion-based GMCs, but it can be extended to other GMCs with different state space.

\noindent \textbf{Setup:}
The state of the GMC at generation $n$ is a distribution $X_n$ over the pixel cube
$\mathbf{E} = [0,1]^{H \times W \times C},$
that is, all images of shape $H \times W \times C$ with pixel values in $[0,1]$.
A single generational update is implemented by running reverse-DDPM sampling
procedure (Alg.~\ref{algo:ddpm-sampling}) conditioned on features $f(X_n)$:
starting from Gaussian noise, the sampler repeatedly, 
adds Gaussian noise (with full support), and 
applies a smooth neural network update that nudges the sample toward
high-density regions,
and finally returns new images $x_i \in X_{n+1}$ in $\mathbf{E}$ \cite{ho2020denoising}.
This defines a Markov chain $X_{n+1} = T(X_n)$, where $T$ is the
one-generation transition rule. We claim that the diffusion-based generational chains are ergodic: in the
long run it forgets its starting point and converges to a unique long-run
distribution over images. We define this property in terms of continuous state space Markov chains.

The argument relies on three intuitive properties of the dynamics of reverse-diffusion: (i) \textit{Everywhere-noisy transitions},  In Alg.~\ref{algo:ddpm-sampling},
at each reverse step, we inject Gaussian noise whose density is strictly
positive everywhere in $\mathbf{E}$. No region is ever assigned zero
probability by the noise alone.
(ii) \textit{Smooth neural updates},  
the neural network in the sampler is built from dot products and standard activations
(ReLU, sigmoid, $\tanh$), so small changes in its input produce small
changes in its output almost everywhere. The overall DDPM update is a
smooth deformation of the injected noise. Finally, (iii) \textit{Bounded image space},  
all generations are kept inside a fixed bounded region:
pixel values are clipped into $[0,1]$, so the chain moves within a
compact set of possible images and cannot escape to infinity. For a continuous state-space Markov chain, ergodicity requires showing three properties: irreducibility, aperiodicity,
and repeated mixing in a bounded space. We can use the intuitive properties of reverse-diffusion to prove that the generational Markov chain has these properties.

\noindent \textbf{Irreducibility:}  
A Markov chain is irreducible if a distribution over a 
given state space assigns nonzero probability mass across the entire state space in the limit. Consider one full generational update, from $X_n$ to $X_{n+1}$.
We first sample a Gaussian noise vector with nonzero probability everywhere,
then pass it through a smooth neural network plus clipping to $[0,1]$.
Because the noise covers the full space and the network does not collapse
everything to a lower-dimensional set, the resulting probability distribution of
$X_{n+1}$ assigns \emph{nonzero} probability to every region of the image cube
$\mathbf{E}$. Intuitively, there is always some (possibly very small) chance of
moving from any given image into any non-negligible region of the image space
in a single transition step.
This is the Markov-chain notion of being able to ``eventually reach everywhere''
from anywhere, and is the intuition behind irreducibility. For more details please see
Sec.~\ref{subsubsec:irreducibility}.

\noindent \textbf{Aperiodicity:}  
A Markov chain is aperiodic if it does not return to a region within the state space at a regular interval (e.g., only return to a state or
region after exactly 3, 6, 9, \dots generations). Because the transition from $X_n$ to $X_{n+1}$ is noisy and smooth, the chain
can not only move far away but also move around locally.
Starting from a given state $X_n$, there is a strictly positive probability that
the next state stays within a tiny neighborhood of $X_n$.
If the chain was periodic, such local randomness would be
impossible: there would be generations where it \emph{must} move elsewhere.
The presence of noise at every step breaks such rigid rhythms.
This rules out fixed periods and gives the intuition for aperiodicity.

\noindent \textbf{Repeated mixing in a bounded space:}  
The chain evolves in a bounded image cube and every one-step transition has a
small but strictly positive probability of landing in any given region of that
cube. In other words, each generation performs a random, noisy `mixing move'
that never completely ignores any subset of images. Mixing leads to
a strong notion of recurrence: the chain keeps coming back to every region
with positive probability.

A Markov chain with these properties
(irreducibility, aperiodicity, and strong mixing in a bounded state space) behaves like a well-mixed
random walk.
The next sections formalize this sketch in the language of continuous state-space Markov chains and provide a full
measure-theoretic proof.

\subsection{Ergodicity of GMCs: Expanded Proof}
\label{subsec:ergodicity-expanded-proof}
 
We illustrate the characteristics of GMCs using the latent-feedback diffusion model, 
though the results extend to all GMCs within our framework.
The GMC operates in the state space $\mathbf{E} = [0,1]^{H\times W \times C}$ endowed with 
Borel $\sigma$-field, meaning random variable $x_n$ takes values in $\mathbf{E}$ and 
are measurable with respect to that $\sigma$-field and the associated 
transition kernel $K(x, \cdot)$ induced by $T$.
The transition kernel assigns a probability measure on $(\mathbf{E}, \mathcal{B}(\mathbf{E}))$ for each state $x$, 
enabling us to perform integration, and to define expectations for the process.
We adopt the Lebesgue\footnote{Lebesgue on the cube means the usual notion of volume 
in $d$-dimensions (length in 1D, area in 2D, volume in 3D, and so on), 
restricted to the cube $[0,1]^{H \times W \times C}$. This provides a natural 
baseline measure against which probabilities and densities on the state space can be defined.} 
measure on the bounded cube $[0,1]^{H \times W \times C}$
as the reference measure, ensuring $\sigma$-finiteness.
This implies that every one-step law $K(x, \cdot)$\textemdash the probability 
distribution of the next state given the current state,
of the Markov chain is continuous 
with respect to the measure $\psi$ and allow us to write each transition probability in density ($k(x,y)$) form,
\begin{equation}
    K(x,A) = \int_A k(x,y) \; \psi(dy), \quad x \in \mathbf{E}, A \subseteq \mathbf{E},
\end{equation}
where $\psi$ never assigns a zero value to any $K(x, \cdot)$.
The per-generation transition kernel, $K: \mathbf{E} \times \mathcal{B}(\mathbf{E}) \rightarrow [0,1]$ is 
defined as 
\begin{equation}
    K(x,B) = \int_B P_{\theta}(y | f(x))\; \psi(dy), \quad B \in \mathcal{B}(\mathbf{E})
\end{equation}
where $P_{\theta}(\cdot)$ is the one-step output density. Equivalently, at the level of 
random variables, 
\begin{equation}
    X_{n+1} \sim P_{\theta}(\cdot | f(X_n)), \quad n = 0,1,\dots
\end{equation}
and the generalized GMC is written as $\{ X_n\}_{n \geq 0} : X_{n+1} \sim K(X_n, \cdot)$, 
where $K$ is the transition kernel induced by $T$ as shown in Fig. \ref{fig:unified-generational-chain}.

The latent-feedback generative models uses a pre-trained image classification
network $f(.): \mathbf{E} \rightarrow \mathbb{R}^d$, to provide $d$-dimensional 
conditional feature vectors as conditional to the diffusion model. 
We follow Dhariwal and Nichol ~\cite{dhariwal2021diffusion} for network architecture 
and diffusion noise schedules. We also use Langevin dynamics~\cite{song2019generative} in 
Denoising Diffusion Probabilistic Model (DDPM)~\cite{ho2020denoising} for 
sampling images. We briefly describe the DDPM sampling process as it is an integral part 
of our proof.

The DDPM sampling stage (see Alg. \ref{algo:ddpm-sampling}) can be viewed as a \textit{discrete Langevin-type dynamics} run in reverse time.  
Starting from pure Gaussian noise $x_T\sim\mathcal N(0,I)$, each reverse step adds a 
drift term proportional to the learned score $\nabla_x\log p_\theta(x_t)$\textemdash pulling 
the sample toward high-density regions\textemdash and injects \textit{annealed} Gaussian noise 
(see step 4 of Alg. \ref{algo:ddpm-sampling})
whose variance shrinks with timestep $t$. 
The learned score field approximates the score $\nabla_x\log p_t(x_t)$~\cite{song2020score},
while the gradually decreasing noise scale $\sigma_t$ plays the role of the temperature schedule, 
yielding a stochastic but guided trajectory that transforms 
white noise into a realistic image as $t$ marches to $0$. The annealed Gaussian 
step guarantees a strictly positive one-step density under mild conditions.

With the above description of the generational Markov chain, we argue that the chain is 
$\psi$- irreducible, aperiodic, and positive Harris recurrent in a 
compact state space $\mathbf{E} = [0,1]^{H \times W \times C}$. We base our proof on 
time-homogeneous kernels as required by Meyn-Tweedie theories~\cite{meynandtweedie2009} 
and can be extended to time-inhomogeneous kernels.

\begin{algorithm}[t]
\caption{DDPM Sampling Process (reprinted from Ho et al.~\cite{ho2020denoising})}
\label{algo:ddpm-sampling}
\begin{algorithmic}[1]
\State $\mathbf{x}_T \sim \mathcal{N}(0,I)$
\For{$t = T,\dots,1$}
    \State $\mathbf{z} \sim \mathcal{N}(0,I)$ \textbf{if} $t>1$ \textbf{else} $\mathbf{z}=0$ \Comment{\textcolor{gray}{: Sample Gaussian noise}}
    \State $\displaystyle 
      \mathbf{x}_{t-1}
      = \frac{1}{\sqrt{\alpha_t}}
        \Bigl(
          \mathbf{x}_t
          - \frac{1-\alpha_t}{\sqrt{1-\bar\alpha_t}}\,
            \varepsilon_\theta(\mathbf{x}_t,t)
        \Bigr)
        + \sigma_t\,\mathbf{z}$ \Comment{\textcolor{gray}{: Annealed Gaussian step}}
\EndFor
\State \textbf{return} $\mathbf{x}_0$
\end{algorithmic}
\end{algorithm}

\subsubsection{\texorpdfstring{$\psi$}{psi}-Irreducibility}
\label{subsubsec:irreducibility}

Irreducibility means the chain is not confined: from any state, it 
can eventually reach every other region of the space with non-zero i.e., \textit{everywhere noisy transition}.
A Markov chain is irreducible when every state can reach every other state 
with positive probability in some finite number of steps i.e., the ability of a Markov chain to reach 
any non-null set in finite time. Formally, it is defined as:
\begin{definition}[$\psi$-irreducible]
    A kernel $K$ is $\psi-irreducible$ if for every $x \in \mathbf{E}$ and 
    every measurable set $A$ with $\psi(A) > 0$, there exists an $n \geq 1$ such that the n-step kernel $K^n(x,A) > 0$. 
\end{definition}

\begin{lemma}[One-step positive density]
\label{lemma:one-step-density}
    One-step diffusion-based generative Markov kernel has positive density; i.e., 
    the one-step density $K(x,y)$ of $K(x, \cdot)$ satisfies $K(x,y) > 0$ for all $y \in \mathbf{E}$. 
    Therefore, for any $A$ with $\psi(A) > 0$, 
    \begin{equation}
        K(x,A) = \int_A K(x,y) \; \psi (dy) > 0.
        \label{eq:positive-density}
    \end{equation}
\end{lemma}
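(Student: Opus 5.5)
The plan is to write one generational update as the composition of the reverse-DDPM steps in Alg.~\ref{algo:ddpm-sampling}. Then I show that positivity of the density survives each step, and use the last noisy step to get a density on the cube. Fix the current state $x\in\mathbf{E}$ and its conditioning vector $c=f(x)$. For $t=T,\dots,2$, the map from $\mathbf{x}_t$ to the next iterate is
\[
\mathbf{x}_{t-1}=g_t(\mathbf{x}_t)+\sigma_t\mathbf{z},\qquad g_t(u)=\tfrac{1}{\sqrt{\alpha_t}}\bigl(u-\tfrac{1-\alpha_t}{\sqrt{1-\bar\alpha_t}}\varepsilon_\theta(u,t,c)\bigr).
\]
Its conditional law given $\mathbf{x}_t=u$ is therefore a Gaussian $\mathcal N(g_t(u),\sigma_t^2 I)$, which has strictly positive Lebesgue density everywhere on $\mathbb R^{d}$, $d=HWC$. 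Since $\varepsilon_\theta$ is built from affine maps and standard activations, $g_t$ is measurable (indeed continuous). The density of $\mathbf{x}_{t-1}$ is then $p_{t-1}(v)=\int p_t(u)\,\phi_{\sigma_t}(v-g_t(u))\,du$, where $\phi_{\sigma_t}$ is the Gaussian density. This is strictly positive for every $v$, because the integrand is positive on a set of positive $p_t$-mass. Starting from $\mathbf{x}_T\sim\mathcal N(0,I)$ and inducting down to $t=2$, $\mathbf{x}_1$ has an everywhere-positive density on $\mathbb R^d$. The induction only uses measurability of $g_t$, so no invertibility of the network is needed at these steps.

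The main obstacle is the final step $t=1$. There $\mathbf{z}=0$, so $\mathbf{x}_0=g_1(\mathbf{x}_1)$ is a deterministic push-forward, which is then clipped into $[0,1]^d$. I see two ways to handle it. The first, which I would try first, treats the last noisy iterate as the output: either $\sigma_1>0$ (include the final noise in the kernel), or the kernel is defined through $\mathbf{x}_1$ and a smooth map. For the push-forward I would argue that $g_1(u)=\alpha_1^{-1/2}(u-\kappa\,\varepsilon_\theta(u))$ is a small Lipschitz perturbation of a scaled identity. Here $\kappa=(1-\alpha_1)/\sqrt{1-\bar\alpha_1}$ is tiny under the linear schedule. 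If $\kappa\,\mathrm{Lip}(\varepsilon_\theta)<1$, then $g_1$ is a bi-Lipschitz homeomorphism of $\mathbb R^d$ onto $\mathbb R^d$. The change-of-variables formula (area formula for Lipschitz maps) then gives $\mathbf{x}_0$ a density $p_1(g_1^{-1}(y))\,|\det Dg_1^{-1}(y)|>0$ a.e. This is exactly the ``network does not collapse everything to a lower-dimensional set'' condition from the sketch, and I would state it as the mild assumption of the lemma. The second approach is a fallback in case clipping is taken literally. Clipping puts atoms on the faces of the cube, so I would take the reference measure $\psi$ to be Lebesgue on the open cube $(0,1)^d$ (plus, if desired, face measures). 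On the interior, clipping is the identity, so the density there coincides with the unclipped one and is strictly positive.

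With positivity of the one-step density $k(x,y)$ for $\psi$-a.e.\ $y$ established, the displayed conclusion~\eqref{eq:positive-density} is immediate. For $\psi(A)>0$, the integral $\int_A k(x,y)\,\psi(dy)$ integrates a strictly positive function over a set of positive measure, so it is positive. Finally, I would note that the argument depends on $x$ only through $c=f(x)$. Positivity therefore holds uniformly in $x$, which is what the subsequent $\psi$-irreducibility and aperiodicity arguments will use. If the network is continuous in $c$ and $\mathbf{E}$ is compact, one can moreover get $\inf_{x}k(x,y)>0$ on compact subsets of the interior, setting up a Doeblin/minorization condition later.
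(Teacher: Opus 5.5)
Your argument is correct under the hypotheses you state, but it follows a different route from the paper's.

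\textbf{The paper's route.} It treats the whole sampler as a single continuous ``transport map'' $T^c_\theta$ applied to $\mathbf{x}_T\sim\mathcal N(0,I)$. It then infers positivity from ``positive Gaussian density plus an (a.e.) continuous network.'' Continuity alone gives nothing here (a constant map is continuous). The real content sits in its informal assumption that the network does not collapse mass onto lower-dimensional sets. The paper never mentions the noiseless last step or the clipping.

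\textbf{Your route.} You follow the reverse-DDPM loop step by step. Each step with $\sigma_t>0$ is a Gaussian kernel, so $\mathbf{x}_1$ has an everywhere-positive density using only measurability of $\varepsilon_\theta$. In fact only the step $t=2$ matters, since convolving \emph{any} law with a nondegenerate Gaussian already gives this. That isolates the single place where a hypothesis on the network is unavoidable. Your clipping discussion also correctly shows that, with Lebesgue $\psi$, the displayed identity $K(x,A)=\int_A k(x,y)\,\psi(dy)$ holds only as ``$\ge$'' unless face measures are added. The inequality is still all that irreducibility and aperiodicity need.

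\textbf{What each buys.} The paper's version is shorter, but its key step is asserted rather than proved. Yours makes every hypothesis explicit and checkable.

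\textbf{A caution on the bi-Lipschitz condition.} Calling $\kappa\,\mathrm{Lip}(\varepsilon_\theta)<1$ mild is optimistic. Here $\kappa=\sqrt{\beta_1}$. A well-trained network at $t=1$ approximates Tweedie's denoiser, for which
$\kappa\,D\varepsilon=I-\tfrac{\bar\alpha_1}{1-\bar\alpha_1}\mathrm{Cov}[\mathbf{x}_0\mid\mathbf{x}_1]$.
This has eigenvalues near $1$ in directions normal to the data manifold, so the last step is nearly a projection. The condition is therefore borderline precisely for good models. The more robust choice is the alternative you already list: keep $\sigma_1>0$, or take the last noisy iterate as the kernel's output. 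This changes the sampler's final step, but then the lemma needs nothing from the network beyond measurability.
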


\begin{proof}
    Inside the one outer call to the diffusion model, we simulate DDPM sampling from i.i.d. Gaussian 
    noise. As shown in Alg. \ref{algo:ddpm-sampling}, it draws $z_T \sim \mathcal{N}(0,I)$, which has 
    positive density everywhere, followed by a smooth transport 
    map $T^c_{\theta}: \mathcal{R}^{d_z} \rightarrow \mathbf{E}$ defined by the 
    neural network.

    The Gaussian has a positive density and the neural network\textemdash which, with smooth activation 
    functions (e.g. sigmoid, tanh), defines a mapping that is continuous  
    almost everywhere for piecewise-linear functions (e.g. ReLU)\textemdash 
    is continuous almost everywhere, given standard neural activations (e.g., ReLU, Sigmoid, tanh). 
    The one-step-density $K(x,y)>0$ for all $y \in \mathbf{E}$.
    Therefore, Eq. \ref{eq:positive-density} holds for any $A$ with $\psi(A)>0$. 
    Thus, $\psi$-irreducibility follows directly with $n=1$(similar arguments are used in DDPM~\cite{ho2020denoising}). 
\end{proof}

In the proof, we make the following two assumptions which are 
usually satisfied in practice: 
\begin{enumerate}
    \item \textit{Surjectivity of $f(x)$:} Even if the feature extractor maps all images in a region 
    to the same class label\textemdash so that, from the model’s conditional prior, this region has zero density 
    and the differences between images are invisible\textemdash the Gaussian noise injected at each sampling 
    step has full support (positive density everywhere). This ensures that the Markov chain can still move anywhere in the state 
    space, preserving irreducibility.
    \item \textit{Bounded outputs of neural networks:} Because $T^c_{\theta}$ is continuous (and often 
    Lipschitz), small changes in the input produce only small changes in the output. This prevents 
    the mapping from sending probability mass to pathological points like $\pm \infty$ or collapsing 
    into Dirac spikes. As a result, the transformed density remains smooth and strictly positive.
\end{enumerate}
The irreducibility proof only fails if the class-conditional sampler 
truly assigns zero probability to some positive-volume 
region of the pixel space.

\subsubsection{Aperiodicity}\label{subsubsec:aperiodicity}

Aperiodicity in Markov chains means the chain does not evolve in fixed cycles\textemdash
returns can occur at irregular times rather than being locked to a fixed rhythm.
Starting from any state, there is no fixed rhythm\textemdash such as `every third step' or `only on 
even times'\textemdash that dictates when the chain can return to a set of states with positive probability.
Formally, for a Markov chain with transition kernel $K$ on a measurable 
state-space ($\mathbf{E}, \mathcal{B}(\mathbf{E})$) that is $\psi$-irreducible for a measure $\psi$, 
the period of this chain is given as 
\begin{equation}
    d = gcd\{ n \geq 1: K^n(x,A) > 0 \text{ for some $A$ with }K(x,A) > 0\}
    \label{eq:period-of-chain}
\end{equation}
where $K^n(x,A)$ is the probability of landing in $A$ 
after exactly $n$-steps from $x$. 
The term `$gcd$' is the greatest common divisor of all the step-counts and represents the period ($d$) of the 
chain. The chain becomes aperiodic if $d=1$, i.e., if there is no clock that forces the chain to 
move in a fixed rhythm.

\begin{lemma}[Local self-transition]
\label{lemma:local-self-transition}
    If for every $x \in \mathbf{E}$ there exists a neighborhood $\mathcal{B}_{\mathcal{\varepsilon}}(x)$ such that
    $K(x, \mathcal{B_\varepsilon}(x))>0$, the chain is aperiodic; i.e., if $\forall x \in \mathbf{E}, \; \exists \;\varepsilon > 0: K(x, \mathcal{B_\varepsilon}(x)) > 0$, then the chain is aperiodic. 
\end{lemma}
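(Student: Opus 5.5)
The plan is to work directly with the definition of the period in Eq.~\eqref{eq:period-of-chain}. There the period is the greatest common divisor of the step counts $n$ for which the chain can land, with positive probability, in a set that is itself charged in one step. It therefore suffices to show that $n=1$ belongs to the index set
\[
  N_x=\{\,n\ge 1:\ K^n(x,A)>0 \text{ for some } A \text{ with } K(x,A)>0\,\}
\]
for every $x\in\mathbf{E}$. Any set of positive integers that contains $1$ has greatest common divisor $1$, so $d=1$.

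\textbf{Step 1 (the core argument).} Fix $x\in\mathbf{E}$ and let $\varepsilon>0$ be the radius given by the hypothesis. Take $A=\mathcal{B}_\varepsilon(x)\cap\mathbf{E}$, which is a Borel set. The hypothesis gives $K(x,A)>0$, so $A$ is an admissible set in Eq.~\eqref{eq:period-of-chain}. Since $K^1=K$, the same inequality shows $K^1(x,A)>0$, hence $1\in N_x$. Then $\gcd(N_x)$ divides $1$, so $d=1$ at every $x$ and the chain is aperiodic in the sense of Eq.~\eqref{eq:period-of-chain}. This step uses only the hypothesis of the lemma. It also explains the name ``local self-transition'': the chain can return near its starting point in a single step, so no cycle of length $\ge 2$ can be forced on it.

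\textbf{Step 2 (robustness to the standard definition).} Meyn--Tweedie define the period of a $\psi$-irreducible chain differently: through a $\nu_M$-small set $C$, with $d=\gcd\{n\ge1: C \text{ is } \nu_n\text{-small with } \delta_n>0\}$. Equivalently, the period is the length of the cyclic decomposition $D_1,\dots,D_d$. To cover that definition I would argue by contradiction. Suppose $d\ge2$ and let $D_1,\dots,D_d$ be the cyclic classes, so that $K(y,D_{i+1})=1$ for $y\in D_i$. Take $x\in D_i$ with $\psi$-positive mass of $D_i$ near $x$, and shrink the ball around $x$ so that it meets essentially only $D_i$. Then $K(x,\mathcal{B}_\varepsilon(x))>0$ would put positive mass into $D_i$, while the cyclic structure requires all of the mass to go to $D_{i+1}$. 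This is a contradiction.

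\textbf{Main obstacle.} The difficulty is in Step 2. The cyclic classes are only measurable sets and need not be separated by balls, and the hypothesis provides \emph{some} $\varepsilon$, not arbitrarily small ones. My first attempt would be to build the small set directly. Set $C=\mathcal{B}_\varepsilon(x)$ and use the continuity of the DDPM transport map $T^c_\theta$, together with compactness of $\mathbf{E}$, to obtain a uniform minorization $K(y,\cdot)\ge\delta\,\nu(\cdot)$ for all $y\in C$, where $\nu$ is a measure with $\nu(C)>0$. This gives $K(y,C)>0$ uniformly on $C$, so $C$ is $\nu_1$-small and the Meyn--Tweedie gcd already contains $1$. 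If that uniform minorization cannot be obtained from the hypothesis alone, I would state plainly that the lemma is proved for the paper's definition in Eq.~\eqref{eq:period-of-chain}, and that extending it to the standard definition requires the extra continuity assumption listed after Lemma~\ref{lemma:one-step-density}.
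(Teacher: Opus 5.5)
Your Step~1 is essentially the paper's argument. The paper gets $K(x,\mathcal{B}_\varepsilon(x))>0$ from Lemma~\ref{lemma:one-step-density} and then says that a positive one-step return rules out $d>1$. But Step~1 proves nothing. In \eqref{eq:period-of-chain} the index $n=1$ is admissible for \emph{every} kernel, because $A=\mathbf{E}$ gives $K(x,\mathbf{E})=K^1(x,\mathbf{E})=1$. So Step~1 never needs the hypothesis, and that definition makes every chain aperiodic. The hypothesis is just as empty on the bounded cube, since any $\varepsilon>\operatorname{diam}\mathbf{E}$ gives $K(x,\mathcal{B}_\varepsilon(x))=1$. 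Any real content would have to come from Step~2. There the gap is genuine and cannot be closed, because for the Meyn--Tweedie period the lemma is false. On $[0,1]$ with Lebesgue $\psi$, let $K(x,\cdot)$ be uniform on $[\tfrac12,1]$ for $x<\tfrac12$ and uniform on $[0,\tfrac12)$ for $x\ge\tfrac12$. This chain is $\psi$-irreducible with cyclic classes $D_1=[0,\tfrac12)$ and $D_2=[\tfrac12,1]$, so it has period $2$, yet $K(x,\mathcal{B}_2(x))=1$ for all $x$. Requiring the hypothesis for every $\varepsilon>0$ does not rescue it. Choose a measurable $D_1\subset[0,1]$ such that $D_1$ and its complement both meet every interval in positive measure, and send each point to normalized Lebesgue measure on the opposite class. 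Every $x$ then lies in the support of $K(x,\cdot)$, and the period is still $2$. This is exactly where ``shrink the ball so it meets essentially only $D_i$'' breaks down: cyclic classes can interpenetrate at every scale.

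What does give aperiodicity is a \emph{global} positivity property, not a local one. Suppose $k(x,\cdot)>0$ everywhere, as Lemma~\ref{lemma:one-step-density} asserts. Take $x$ in a cyclic class $D_i$, which is $\psi$-positive. Then $K(x,D_i)=\int_{D_i}k(x,y)\,\psi(dy)>0$, which contradicts $K(x,D_{i+1})=1$ when $d\ge 2$. This is what the paper's contradiction argument is really using, and it needs neither compactness nor continuity. Your fallback also works and even gives strong aperiodicity with $C=\mathbf{E}$. However, the minorization $K(y,\cdot)\ge\varepsilon\,\psi(\cdot)$ comes from a strictly positive, jointly continuous \emph{density} on the compact $\mathbf{E}\times\mathbf{E}$ (the paper's \eqref{eq:petite-set-minorisation}). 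Continuity of $T^c_\theta$ alone is not enough, since a continuous pushforward of Gaussian noise need not have a density at all. In both repairs you are proving that the DDPM chain is aperiodic from Lemma~\ref{lemma:one-step-density}-type assumptions. The local self-transition hypothesis plays no role, so the honest conclusion is that the lemma as stated is either vacuous (under the paper's definition) or false (under the standard definition).
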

\begin{proof}
Lemma \ref{lemma:one-step-density} shows that the one-step density $K(x,y)$ is strictly 
positive everywhere, $K(x,y)>0, \forall \; x,y \in \mathbf{E}$. 
For any state $x$, a small neighborhood ball $\mathcal{B_\varepsilon}(x)$ has non-zero
transition probability, ensuring aperiodicity. 
\begin{equation*}
    K(x, \mathcal{B_\varepsilon}(x)) = \int_{\mathcal{B_\varepsilon}(x)} K(x,y) \; \psi(dy) \; >0,
\end{equation*}
and hence, aperiodic. 
\end{proof}
We can also prove aperiodicity by contradiction, suppose the period $d>1$. Then, all non-zero return probabilities 
would occur at step $d, 2d, 3d, \dots$. But, the Lemma \ref{lemma:one-step-density} says there is already a
positive chance to come back in one-step, contradicting periodicity. Therefore, the chain is aperiodic 
with period $d=1$. Aperiodicity leads to smooth neural updates in GMCs.

\subsubsection{Harris Recurrence}\label{subsubsecc:positive-reccurrence}

In a Markov chain, Harris recurrence means that starting from any state, the chain will
return to any set of states with positive measure infinitely often, with probability 1. 
It guarantees the chain will eventually revisit any significant portion of its state
space repeatedly. We use petite-set to establish this property for generational Markov chains. 

A \emph{petite set} is a subset of the state space that, relative to the Markov kernel, ensures 
the chain has enough chance to mix, which is crucial for establishing ergodicity. 
In other words, petite sets act as `anchors' where the chain does not get stuck and retains 
access to the rest of the space.\footnote{By contrast, a finite set only describes the number 
of elements; it does not guarantee such mixing. For instance, a finite set may fail to be petite 
if the chain gets trapped in an absorbing state, e.g. CycleGAN GMC}
In our DDPM kernel, the Gaussian noise term ensures that 
the whole image cube is petite, even though it is uncountable.
Typical images lie in the bounded pixel cube $[0,1]^{H\times W \times C}$, and the 
overall state space is finite-dimensional.
The reverse-DDPM transition 
density function is jointly continuous and strictly positive, making 
the entire bounded cube qualifies as a petite set. The proof 
proceeds as follows.

Assuming $\psi$-irreducibility (Lemma \ref{lemma:one-step-density}) 
and aperiodicity (Lemma \ref{lemma:local-self-transition}) of our GMC with finite state space 
$\mathbf{E} = [0,1]^{H \times W \times C}$ ($\subset \mathbb{R}^{784}$ for MNIST), 
we apply petite set proof to prove 
Harris recurrence. To this end, we restate small set condition 
and petite set Theorems 5.5.3 from Meyn and Tweedie, 2nd edition and refer to the book
for their proofs.

\begin{lemma}[Small set in finite space]
\label{lemma:small-set}
    A set $C \in \mathcal{B}(\mathbf{E})$ is called a small set if there exits 
    an $m>0$, and a non-trivial measure $v_m$ on $\mathcal{B}(\mathbf{E})$, such that for all 
    $x \in C, \; B \in \mathcal{B}(\mathbf{E})$, 
    \begin{equation*}
        K^m (x, B) \geq v_m (B).
    \end{equation*}
    Then, $C$ is $v_m$-small.
\end{lemma}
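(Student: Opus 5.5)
As literally stated, the lemma is the Meyn--Tweedie definition of a small set. What has to be shown for our chain is that the hypothesis can be met: the whole cube $C=\mathbf{E}=[0,1]^{H\times W\times C}$ is $\nu_m$-small for the generational kernel $K$. The plan is to take $m=1$ and $\nu_1=\delta\,\psi$ for an explicit constant $\delta>0$, where $\psi$ is Lebesgue measure on the cube. That is, we look for a uniform minorization
\begin{equation*}
K(x,B)\;\ge\;\delta\,\psi(B)\qquad \text{for all } x\in\mathbf{E},\ B\in\mathcal{B}(\mathbf{E}).
\end{equation*}
Once this holds, $\nu_1$ is non-trivial because $\psi(\mathbf{E})=1>0$, and the small-set condition follows. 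Since small sets are petite, this feeds directly into the petite-set route to Harris recurrence and, later, to uniform (Doeblin) ergodicity.

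\textbf{Step 1: positivity becomes a uniform lower bound.} Lemma~\ref{lemma:one-step-density} gives $K(x,y)>0$ for every $x,y$, but only pointwise. I would add the assumption, already implicit in the ``bounded outputs'' and continuity remarks after that lemma, that $(x,y)\mapsto K(x,y)$ is jointly continuous on $\mathbf{E}\times\mathbf{E}$. Since $\mathbf{E}\times\mathbf{E}$ is compact, a continuous, strictly positive function attains a positive minimum, so
\begin{equation*}
\delta:=\min_{(x,y)\in\mathbf{E}\times\mathbf{E}}K(x,y)>0.
\end{equation*}
Integrating over $B$ then gives $K(x,B)=\int_B K(x,y)\,\psi(dy)\ge\delta\,\psi(B)$ for all $x$, which is the required minorization.

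\textbf{Step 2: a more robust route to $\delta$.} Joint continuity may fail because of ReLU kinks and pixel clipping, so I would also try to get $\delta$ directly from Alg.~\ref{algo:ddpm-sampling}, which avoids needing continuity of the network at all. Fix a reverse step $t>1$. Conditional on $\mathbf{x}_t$, the next iterate is
\begin{equation*}
\mathbf{x}_{t-1}=\mu_\theta(\mathbf{x}_t,t,f(x))+\sigma_t\mathbf{z}.
\end{equation*}
If $\|\mu_\theta\|\le M$ uniformly in the state $x$ (the bounded-output assumption), then $\mathbf{x}_{t-1}$ has a Gaussian-mixture density. For every point $y$ in a bounded region, that density is at least $(2\pi\sigma_t^2)^{-d/2}\exp\!\big(-(M+\mathrm{diam})^2/2\sigma_t^2\big)$. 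This bound is independent of both $x$ and $\mathbf{x}_t$, because it only uses how far $y$ can be from the mean.

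The remaining difficulty is the final step $t=1$, which has no noise and is followed by clipping. I would push the lower bound through this deterministic map in one of two ways. The first is to require that the last map be a $C^1$ diffeomorphism onto its image with Jacobian bounded away from zero on the relevant compact set. The second, simpler option is to redefine the kernel so that $\mathbf{E}$ is the state of $\mathbf{x}_1$ rather than $\mathbf{x}_0$. In that case the minorization holds with $m=1$ on the interior of the cube.

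\textbf{Step 3 and main obstacle.} Clipping to $[0,1]$ places atoms on the boundary of the cube. These atoms only add mass, so they do not harm a lower bound against $\psi$, but they must be handled carefully so the kernel still has an absolutely continuous part with the density bounded below. Overall, I expect the main obstacle to be the uniformity in $x$, not the positivity. Pointwise positivity alone does not give a small set, so the argument must explicitly use compactness of $\mathbf{E}$ together with uniform boundedness of the denoiser and the conditioning features $f(x)$. This is also exactly where the CycleGAN and Lucier chains fail: their kernels are Dirac masses, so no $\delta>0$ exists.
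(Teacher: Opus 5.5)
Your argument is correct, and your Step 1 is exactly the paper's route. The paper restates the lemma from Meyn--Tweedie (Sec.~5.2) without proof, then shows the whole cube is small by taking $\varepsilon=\min_{\mathbf{E}\times\mathbf{E}}k(x,y)>0$ from the asserted joint continuity and strict positivity of $k$, which gives $K(x,\cdot)\ge\varepsilon\,\psi(\cdot)$ with $m=1$. Your Step 2 goes beyond the paper, and if you pursue it, note that $\|\mu_\theta\|\le M$ cannot hold uniformly in $\mathbf{x}_t$, because the DDPM mean contains $\mathbf{x}_t/\sqrt{\alpha_t}$. Instead, bound $\mu_\theta$ on a ball and show $\mathbf{x}_t$ lands in that ball with probability bounded below uniformly in $x$, e.g.\ by inducting down from $\mathbf{x}_T\sim\mathcal{N}(0,I)$, which does not depend on $x$.
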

\noindent We refer to the proof provided in Sec. 5.2, Meyn and Tweedie, 2nd edition~\cite{meynandtweedie2009}. 

\begin{lemma}[Petite set minorization]
\label{lemma:petite-sets}
A set $C \in \mathcal{B}(\mathbf{E})$ $v_a$-petite
if the sampled chain satisfies the bound 
    $K_a(x,B) \geq v_a(B)$, for all $x \in C, \; B \in \mathcal{B}(\mathbf{E})$, where $v_a$ is a 
    non-trivial measure on $\mathcal{B}(\mathbf{E})$, stated as: 
    \begin{equation*}
        \text{If }\; C \in \mathcal{B}(\mathbf{E})\; \text{is } v_m \;\text{-small, then } C \text{is }\; v_{\delta_m} \text{-petite.}
    \end{equation*}
\end{lemma}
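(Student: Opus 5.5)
The plan is to prove the implication ``$v_m$-small $\Rightarrow$ $v_{\delta_m}$-petite'' directly from the definition of the sampled kernel. For a probability distribution $a$ on $\{0,1,2,\dots\}$ the sampled kernel is
\begin{equation*}
K_a(x,B)=\sum_{n\ge 0} a(n)\,K^n(x,B),
\end{equation*}
and the natural choice is the point mass $a=\delta_m$ at the integer $m$ supplied by the small-set condition of Lemma~\ref{lemma:small-set}. With this choice every term of the sum vanishes except $n=m$, so $K_{\delta_m}(x,B)=K^m(x,B)$ for all $x\in\mathbf{E}$ and $B\in\mathcal{B}(\mathbf{E})$. I will state this first, since it is the identity the whole proof rests on.

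Next I invoke the hypothesis. $C$ is $v_m$-small, so $K^m(x,B)\ge v_m(B)$ for all $x\in C$ and all measurable $B$. Combined with the identity, this gives $K_{\delta_m}(x,B)\ge v_m(B)$ on $C$. Setting $v_{\delta_m}:=v_m$ gives exactly the petite minorization $K_a(x,B)\ge v_a(B)$ with $a=\delta_m$. Nontriviality of $v_{\delta_m}$ is inherited from $v_m$, which completes the argument.

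The only points needing care are conventions rather than estimates. First, $\delta_m$ must be a genuine probability distribution on $\mathbb{Z}_{\ge 0}$ (it is, with total mass one), and $K^0(x,\cdot)=\delta_x$ never contributes because $m\ge 1$. Second, the measure must be on the right space: the minorizing measure is a measure on $\mathcal{B}(\mathbf{E})$ and not a function of $x$, so the bound is uniform over $C$ as the petite definition requires. I expect the main ``obstacle'' to be presentational. The lemma as stated conflates the definition of petiteness with the implication, so I will first fix the definition (a set $C$ is $v_a$-petite if $K_a(x,\cdot)\ge v_a(\cdot)$ for all $x\in C$, for some sampling distribution $a$ and nontrivial $v_a$) and then derive the implication. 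This matches Meyn and Tweedie, Prop.~5.5.3.

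For use in the paper, I will close by noting the consequence. By Lemma~\ref{lemma:one-step-density}, with $m=1$, the whole cube $\mathbf{E}$ is small: provided the one-step density is jointly continuous and strictly positive on the compact set $\mathbf{E}\times\mathbf{E}$, one can take $v_1(B)=\epsilon\,\psi(B)$ with $\epsilon=\inf_{x,y}k(x,y)>0$. The present lemma then makes $\mathbf{E}$ itself $v_{\delta_1}$-petite.
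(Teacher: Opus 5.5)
Your proof is correct and is the same one-line argument behind the Meyn--Tweedie result (Sec.~5.5.2) that the paper cites instead of proving: with $a=\delta_m$ one has $K_{\delta_m}=K^m$, so the small-set bound $K^m(x,\cdot)\ge v_m(\cdot)$ on $C$ is already the petite bound with $v_{\delta_m}:=v_m$. Your closing remark is also right to make the smallness of $\mathbf{E}$ conditional on joint continuity of $k$ on the compact set $\mathbf{E}\times\mathbf{E}$, since Lemma~\ref{lemma:one-step-density} on its own gives only pointwise positivity, not a uniform lower bound.
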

\noindent We refer to Sec. 5.5.2 in Meyn and Tweedie (2nd edition) for its proof~\cite{meynandtweedie2009}.

The reverse DDPM transition density $k(x,y)$ is jointly continuous and strictly positive on the
compact set $\mathbf{E}\times\mathbf{E}$. 
Because $k(x,y)$ is continuous and $\mathbf{E}\times\mathbf{E}$ is compact, 
$k$ cannot shrink arbitrarily close to zero without actually reaching a minimum value $\varepsilon$. 
Thus, there exists a constant $\varepsilon>0$ such that $k(x,y)\ge \varepsilon$ 
for all $x,y\in\mathbf{E}$.
\begin{equation*}
    \varepsilon \;:=\; \min_{(x,y)\in \mathbf{E}\times \mathbf{E}} k(x,y) \;>\; 0.
\end{equation*}
Therefore, for any measurable $B\subseteq \mathbf{E}$ and all $x\in\mathbf{E}$,
\begin{equation}
K(x,B) \;=\; \int_B k(x,y)\,\psi(dy) \;\ge\; \int_B \varepsilon\,\psi(dy) 
\;=\; \varepsilon\,\psi(B),
\label{eq:petite-set-minorisation}
\end{equation}
which yields the desired petite set minorization $K(x,\cdot)\ge \varepsilon\,\psi(\cdot)$.
Therefore, $\mathbf{E}$ is a small set and using Lemmas \ref{lemma:small-set} and \ref{lemma:petite-sets}, 
it is automatically petite. 
In our DDPM kernel, the Gaussian noise term ensures that 
the whole image cube is petite, even though it is uncountable.
In other words, the whole cube ($[0,1]^{H \times W \times C}$) functions 
as a regeneration region, ensuring repeated access to the full state space.

Harris recurrent is the right notion of ``will certainly return'' for Markov chains in a general 
state space. If a Markov chain is $\psi$-irreducible, aperiodic, and has a petite set, then the chain is 
Harris recurrent (Meyn and Tweedie, 2nd edition, Theorem 9.1.7). We restate the theorem here 
and refer to the book for the proof. 

\begin{lemma}[$\psi$-irreducible + aperiodicity + petite set $\Rightarrow$ Harris recurrence] 
\label{lemma:harris-recurrece}
    Suppose that a Markov chain is $\psi$-irreducible, 
    if there exists some petite set in $\mathcal{B}(\mathbf{E})$
    such that the hitting probability $L(x,C) \equiv 1, \; x \in \mathbf{E}$, then the chain is Harris recurrent.
\end{lemma}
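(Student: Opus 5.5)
We prove the statement by showing that the chain returns to the petite set $C$ with probability one from every starting point, and then applying the standard Meyn--Tweedie argument. For our GMC the natural choice is $C=\mathbf{E}$. It is petite by the minorization $K(x,\cdot)\ge\varepsilon\,\psi(\cdot)$ in Eq.~\ref{eq:petite-set-minorisation}, combined with Lemmas~\ref{lemma:small-set} and~\ref{lemma:petite-sets}. Since the chain never leaves $\mathbf{E}$, the return probability $L(x,\mathbf{E})\equiv 1$ holds trivially. The real content is therefore the general implication: a $\psi$-irreducible chain whose hitting probability of some petite set is identically one must be Harris recurrent.

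The argument runs in three steps.

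\textbf{Step 1 (uniform minorization on $C$).} Because $C$ is $v_a$-petite, the sampled kernel satisfies $K_a(x,B)\ge v_a(B)$ for every $x\in C$. Taking $B=A$ gives, for any set $A$ with $v_a(A)>0$, a uniform lower bound $\delta_A=\sum_n a(n)K^n(x,A)\ge v_a(A)>0$ for all $x\in C$. Irreducibility lets us pass from $v_a$-positive sets to $\psi$-positive sets: for any $A$ with $\psi(A)>0$, $\psi$-irreducibility gives a set $C'\subseteq C$ and a step count $m$ such that $\inf_{x\in C'}K^m(x,A)>0$ (in our setting, directly $K(x,A)\ge\varepsilon\psi(A)$ for all $x$). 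This yields a constant $\delta>0$ and a finite horizon $N$ with $P_x(\tau_A\le N)\ge\delta$ for every $x\in C$.

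\textbf{Step 2 (infinitely many visits to $C$).} Since $L(x,C)=1$ from every $x$, the strong Markov property applied at successive return times shows that $C$ is visited infinitely often almost surely, i.e.\ $Q(x,C)=1$.

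\textbf{Step 3 (geometric trials).} At each visit to $C$ we get an independent-in-law opportunity, of probability at least $\delta$, to hit $A$ within $N$ steps. To make the trials genuinely separate, we condition at stopping times spaced more than $N$ apart. The chance of failing $k$ consecutive trials is then at most $(1-\delta)^k\to 0$. A Borel--Cantelli/martingale argument upgrades this from ``$A$ is reached at least once'' to $Q(x,A)=1$ for all $x$ and all $\psi$-positive $A$, which is exactly Harris recurrence.

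The main obstacle is Step 1 in full generality. Petiteness provides a minorization by $v_a$, not by $\psi$, so we must argue that $v_a$-positivity can be transferred to arbitrary $\psi$-positive $A$ uniformly over $C$. I would try first to invoke Meyn--Tweedie's result that for $\psi$-irreducible chains one may take $v_a$ to be an irreducibility measure, via Proposition 5.5.5 on petite sets. In our GMC this difficulty disappears entirely: the minorization $K(x,\cdot)\ge\varepsilon\psi$ holds on all of $\mathbf{E}$, so Step 3 gives $P_x(X_n\notin A\ \forall n\ge m)\le\lim_k(1-\varepsilon\psi(A))^k=0$ directly.
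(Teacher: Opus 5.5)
Your proof is correct. It follows the standard Meyn--Tweedie argument behind the result (9.1.7) that the paper only cites, without giving a proof of its own: $Q(x,C)=1$ follows from $L(x,C)\equiv 1$ by the strong Markov property, every $\psi$-positive set is uniformly accessible from the petite set, and geometric trials finish the argument. As in the cited result, aperiodicity plays no role; the hitting condition $L(x,C)\equiv 1$ is what does the work.

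The one sentence you should replace is the $C'\subseteq C$ claim in Step~1, because a bound on a subset $C'$ gives nothing uniform over all of $C$. Instead, compose the minorization with irreducibility:
\begin{itemize}
\item Since $\sum_{m\ge1}2^{-m}K^m(y,A)>0$ for every $y$ and $v_a$ is nontrivial, some $m\ge1$ has $\delta_0:=\int v_a(dy)\,K^m(y,A)>0$.
\item Then $\sum_n a(n)K^{n+m}(x,A)=\int K_a(x,dy)\,K^m(y,A)\ge\delta_0$ for all $x\in C$.
\item Truncating $a$ where its tail mass drops below $\delta_0/2$ gives $P_x(\tau_A\le N)\ge\delta_0/2$ on $C$ for a finite $N$.
\end{itemize}
Your Proposition~5.5.5 fallback achieves the same thing. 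With either repair, Steps~2--3 go through as written.
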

\noindent In other words, if there is any petite set $C$ that the chain is guaranteed to hit eventually, 
no matter where it starts, i.e., hitting probability $L(x,C) \equiv 1$ for every state $x$, then 
the whole chain is Harris recurrent. Harris recurrence requires that, relative to the same $\psi$, every 
set of positive $\psi$-measure is hit with probability 1 from every starting point. 

\subsubsection{Ergodicity}\label{subsec:ergodicity}

We now complete the analysis of the generational Markov chain (GMC) by proving
that the latent-feedback GMC is \emph{ergodic}: irrespective of the starting state, 
it converges to a unique invariant distribution. 
We begin by recalling the relevant metric.

\begin{definition}[Total variation (TV) norm ($\lVert \cdot \rVert_{TV}$)] if $\mu$ is a signed measure on $\mathcal{B}(\mathbf{E})$, then 
    the total variation norm $\lVert \mu \rVert$ is defined as 
    \begin{equation*}
        \lVert \mu \rVert := \sup_{f:\vert f \vert \leq 1} \vert \mu(f) \vert = \sup_{A \in \mathcal{B}(\mathbf{E})} \mu(A) - \inf_{A \in \mathcal{B}(\mathbf{E})} \mu(A)
    \end{equation*}
\noindent The total variation (TV) norm measures the maximum 
discrepancy between the probability measures.
There are two equivalent ways to measure this total size: one is to see how large $\mu$ can be 
when integrated against any test function $f$ with values between $-1$ and $1$; 
equivalently, it measures the difference between the sets where the probability
mass is maximally assigned and minimally assigned. 
The TV distance defined this way is a genuine metric 
between probability measures and always lies between $0$ and $1$.
\end{definition}
Recall from Lemma~\ref{lemma:petite-sets} that the one–step density of the GMC satisfies a
petite set condition (Eq. \ref{eq:petite-set-minorisation})
for some $\varepsilon>0$ and the $\sigma$-finite reference measure~$\psi$.  In
combination with $\psi$-irreducibility (Lemma~\ref{lemma:one-step-density}), aperiodicity
(Lemma~\ref{lemma:local-self-transition}), and 
positive Harris recurrence (Lemma~\ref{lemma:harris-recurrece}), this is enough to
establish ergodicity of the GMC.

\begin{corollary}[Ergodic GMCs]
\label{cor:ergodic}
The generational Markov chain $\{ X_n\}_{n \geq 0} : X_{n+1} \sim K(X_n, \cdot)$, with transition kernel $K$, is ergodic on state space
$E\subset[0,1]^{H\times W\times C}$ and have following properties:
\begin{enumerate}
    \item \textit{Existence and uniqueness of the stationary law.}  
    If an ergodic Markov chain starts from an initial distribution $X_0 \sim \Pi_0$, it converges to 
    a unique stationary distribution $\Pi$ after mixing and this distribution $\pi$ remains invariant 
    under further application of the transition kernel $K$.
  \item \textit{$K^n(x, \cdot)$ converges to $\pi$ in total variation}  
        from a fixed starting point $x\in \mathbf{E}$ and $n\ge 0$,
        \begin{equation}
            \lim_{n \rightarrow \infty} \lVert K^{n}(x,\cdot)-\pi\rVert_{\mathrm{TV}} = 0, \quad \forall \; x \in \mathbf{E}.
        \end{equation}
\end{enumerate}
\end{corollary}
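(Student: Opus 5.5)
The cleanest route to Corollary~\ref{cor:ergodic} goes through the uniform minorization in Eq.~\eqref{eq:petite-set-minorisation}. That bound makes the whole cube $\mathbf{E}$ a small set with $m=1$. This is precisely the Doeblin condition, and from it both claims follow with an explicit geometric rate. We therefore do not need to lean on the more delicate Harris-recurrence chain of lemmas. Concretely, set $\nu(\cdot):=\psi(\cdot)/\psi(\mathbf{E})$ and $\beta:=\varepsilon\,\psi(\mathbf{E})\in(0,1]$. Then Eq.~\eqref{eq:petite-set-minorisation} reads $K(x,\cdot)\ge \beta\,\nu(\cdot)$ for all $x\in\mathbf{E}$. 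Since $\psi$ is Lebesgue measure on the unit cube, $\psi(\mathbf{E})=1$, so $\beta=\varepsilon$.

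\textbf{Contraction.} The first step is to show that the map $\mu\mapsto\mu K$ is a strict contraction in total variation on the space of probability measures. Decompose $K(x,\cdot)=\beta\nu(\cdot)+(1-\beta)R(x,\cdot)$, where $R$ is a Markov kernel; it is nonnegative by the minorization, and it is well defined if $\beta<1$, while the case $\beta=1$ is trivial. For probability measures $\mu,\mu'$ the $\nu$ parts cancel, giving $\mu K-\mu'K=(1-\beta)(\mu-\mu')R$. Since $\|\sigma R\|_{TV}\le\|\sigma\|_{TV}$ for any signed measure $\sigma$, we obtain
\begin{equation*}
\lVert \mu K-\mu' K\rVert_{TV}\le (1-\beta)\,\lVert \mu-\mu'\rVert_{TV}.
\end{equation*}

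\textbf{Existence and uniqueness.} The space of probability measures on $(\mathbf{E},\mathcal{B}(\mathbf{E}))$ is complete under the TV norm, because TV-Cauchy sequences converge setwise to a probability measure. Banach's fixed point theorem then yields a unique $\pi$ with $\pi K=\pi$, which is item~1. An alternative that avoids completeness is to note that $(\mu K^n)_n$ is Cauchy and to identify the limit directly; I would state the completeness fact and cite it.

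\textbf{Convergence.} Taking $\mu=\delta_x$ and $\mu'=\pi$ and iterating the contraction gives
\begin{equation*}
\lVert K^n(x,\cdot)-\pi\rVert_{TV}\le 2(1-\beta)^n\to 0 .
\end{equation*}
This holds uniformly in $x$, which is item~2, and integrating against any initial law $\Pi_0$ gives convergence from that law as well. As a consistency check, I would remark that this agrees with Meyn--Tweedie Theorem 16.0.2 (uniform ergodicity), since $\mathbf{E}$ is petite by Lemma~\ref{lemma:petite-sets} and the chain is $\psi$-irreducible and aperiodic by Lemmas~\ref{lemma:one-step-density} and~\ref{lemma:local-self-transition}.

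\textbf{Main obstacle.} The real difficulty is not the Markov-chain argument but justifying its hypothesis: that the transition density $k(x,y)$ is jointly continuous and strictly positive on the compact set $\mathbf{E}\times\mathbf{E}$, so that its minimum $\varepsilon$ is positive. Clipping to $[0,1]$ creates atoms on the boundary of the cube, and ReLU networks are only piecewise smooth. To handle this, I would first work with the pre-clipping Gaussian step. In the last stochastic step of Alg.~\ref{algo:ddpm-sampling}, $x_{1}$ is a Gaussian with variance $\sigma_2^2>0$ centred at a continuous function of $(x_2,f(x))$. Its density is therefore bounded below on compact sets, uniformly over the conditioning whenever $f$ and $\varepsilon_\theta$ are bounded. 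The deterministic final step is an affine-plus-bounded-perturbation map, and I would use it to transfer this lower bound to $x_0$. This yields the minorization directly, needing only a uniform lower bound and not continuity of $k$. I expect this step to require an explicit assumption on the network, such as boundedness or Lipschitz continuity of $\varepsilon_\theta$, stated up front.
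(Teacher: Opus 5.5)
Your argument is correct, but it takes a different route from the paper.

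\textbf{The paper's route.} The paper's proof is essentially a citation. $\psi$-irreducibility plus aperiodicity give uniqueness of the invariant law (Meyn--Tweedie, Theorem 13.0.1). ``Positive Harris recurrence'' from the Harris-recurrence lemma is invoked for existence. The total-variation convergence in item 2 is left implicit in the cited ergodic theorem.

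\textbf{Your route.} You read the global minorization $K(x,\cdot)\ge\varepsilon\,\psi(\cdot)$, which the paper derives just before the corollary, as a Doeblin condition. This gives a Dobrushin contraction of $\mu\mapsto\mu K$ in total variation. Banach's fixed point theorem then delivers existence and uniqueness at once, and iterating gives $\lVert K^n(x,\cdot)-\pi\rVert_{TV}\le 2(1-\varepsilon)^n$.

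\textbf{What each buys.} In this setting your route is self-contained and strictly stronger. It yields uniform geometric ergodicity with an explicit rate. It also produces an invariant probability measure directly. The paper's lemma establishes only Harris recurrence, so the positivity that its existence step relies on is asserted rather than proved. Positivity does follow, e.g.\ because the return time to the petite set $\mathbf{E}$ is identically $1$, but your argument makes that detour unnecessary. The Meyn--Tweedie route is more general in principle: it needs only a minorization on some petite set plus recurrence or drift control, which matters on non-compact spaces. Once the minorization holds on the whole state space, that generality buys nothing here.

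\textbf{On your main obstacle.} Your plan is more careful than the paper. The paper's claim that $k$ is jointly continuous on $\mathbf{E}\times\mathbf{E}$ cannot hold literally once clipping puts mass on the boundary of the cube. One correction: to push the Gaussian lower bound for $x_1$ through the deterministic last step $g(y)=\alpha_1^{-1/2}\bigl(y-c\,\varepsilon_\theta(y,1)\bigr)$, you need boundedness \emph{and} a Lipschitz bound on $\varepsilon_\theta$ (uniform in the conditioning), not one or the other.
\begin{itemize}
\item Boundedness makes $g$ surjective (a bounded continuous perturbation of an invertible linear map, by Brouwer).
\item It also confines $g^{-1}(B)$ to a bounded region where the density of $x_1$ is bounded below.
\item It gives tightness of $x_2$ uniformly in $x$, which makes that lower bound uniform over the starting state.
\item A Lipschitz constant $L$ for $g$ gives $\mathrm{Leb}(g^{-1}(B))\ge L^{-d}\,\mathrm{Leb}(B)$ with $d=HWC$.
\end{itemize}
Since clipping fixes points of the cube, $K(x,B)\ge\mathbb{P}(x_0\in B)$ for $B\subseteq\mathbf{E}$, and the minorization follows.
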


\begin{proof}
  $\psi$-irreducibility together with aperiodicity ensures that any two
  invariant probability measures must coincide
  (Meyn–Tweedie, Theorem 13.0.1), hence uniqueness once existence is shown.
  Positive Harris recurrence (Lemma~3.5) furnishes existence.
\end{proof}

\begin{figure*}[t]
    \centering
    \includegraphics[width=1\textwidth]{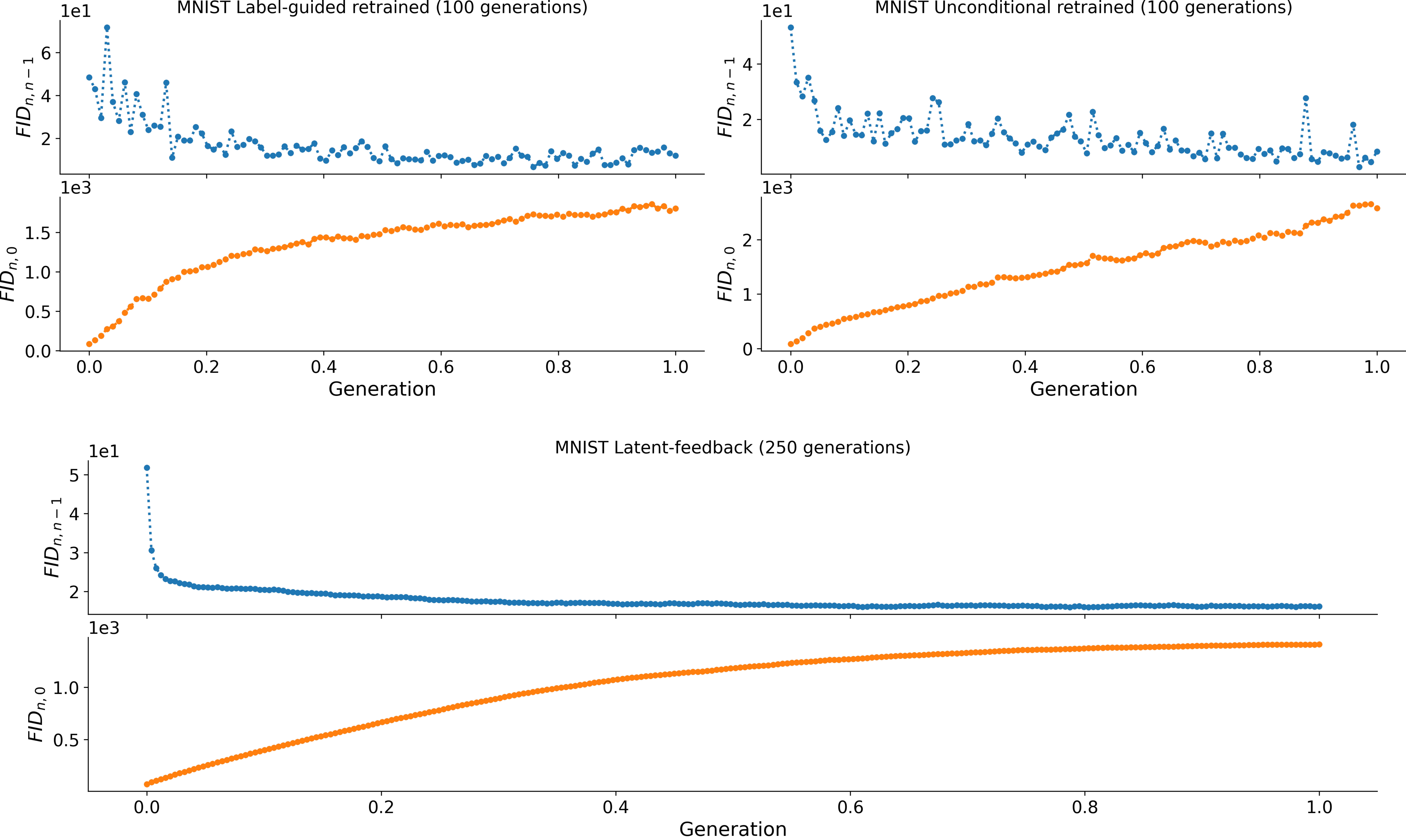}
    \caption{\textbf{Local and cumulative drifts in generational Markov chains on MNIST.} 
    Each plot shows the local drift ($FID_{n,n-1}$, top) and cumulative drift ($FID_{n,0}$, bottom)
    across generations. Results are shown for the label-guided retrained diffusion model (top left), 
    the unconditional retrained diffusion model (top right), 
    and the latent-feedback diffusion model (bottom). 
    Together they illustrate distinct convergence behaviors\textemdash rapid stabilization 
    in the label-guided case, continued evolution in the unconditional model, 
    and gradual stationarity in the latent-feedback chain.}
    \label{fig:mnist-drift-plots}
\end{figure*}
\subsection{Empirical Analysis of GMCs}

We empirically analyze both ergodic and non-ergodic GMCs. A 
chain is classified as ergodic if it satisfies three 
conditions: $\psi$-irreducibility, aperiodicity, and Harris recurrence.
For our experiments\textemdash particularly in the 
context of iterative feedback for image generative models\textemdash 
the key requirement in practice is the one-step positive density condition, 
which ensures a non-zero probability of moving between any two states.
In practice, this condition can 
be satisfied by sampling algorithms that incorporate an annealed Gaussian step 
(Gaussian noise step, Algo.~\ref{algo:ddpm-sampling}). Conversely, any GMC that 
fails to meet one or more of the above conditions is non-ergodic.

\subsubsection{Non-Ergodic GMCs}\label{subsubsec:non-ergodic}

The construction of most iterative feedback-based image generative chains 
follows a common pattern: samples generated by the current model are used to 
train the next model. 
The functional analogue of Lucier’s feedback loop 
and CycleGAN chains fail this one-step density requirement, 
and thus are non-ergodic.
Both systems instead behave like absorbing Markov chains with a 
finite set of states. 

The functional analogue of Lucier’s feedback loop is modeled using a fixed 
room impulse response, which repeatedly applies convolution to its own output. 
As a result, the induced GMC lacks a positive transition probability from one 
state to any other. Similarly, the CycleGAN GMC exhibits the behavior of an 
absorbing Markov chain with multiple attractor states. During the cyclic 
translation process between domains, output images move from an attractor in 
one domain to the corresponding attractor in the other (Fig. 2 of the main paper). 

Although they may appear stationary, these GMCs simply cycle among 
a finite set of absorbing states. Moreover, local and cumulative drifts 
could not be reliably estimated because outputs quickly drifted 
out-of-domain for their respective classifier networks.

\subsubsection{Ergodic GMCs}\label{subsubsec:ergodic}

All other GMCs in our diffusion-based framework\textemdash namely latent-feedback, 
label-guided retrained, and unconditional retrained\textemdash satisfy the critical 
one-step positive density condition and are therefore ergodic. 
Figures~\ref{fig:mnist-drift-plots} and~\ref{fig:imgnet-drift-plots} present 
the local and cumulative drift plots for the corresponding experiments, with 
the number of generations scaled to a maximum of 1. We begin by discussing the 
MNIST-based experiments.

\noindent \textbf{MNIST dataset:} Fig.~\ref{fig:mnist-drift-plots} tracks the local-drift 
indicator $\operatorname{FID}_{n,n-1}$ (blue curve) and 
the cumulative-drift indicator $\operatorname{FID}_{n,0}$ (orange curve) 
for the label-guided retrained, unconditional retrained, and 
latent-feedback GMCs. 

In the label-guided retrained GMC (100 steps, Fig.~\ref{fig:mnist-drift-plots} top-left-panel), the first $\sim$40 iterations 
show an active transient phase (sharp local drift decrease and cumulative drift increase). 
Between steps 40-90, the system enters a slow transient phase, 
and by $\sim$90 iterations both curves plateau, 
indicating empirical stationarity: the local drift plateaus, and the long-range distance ceases to 
grow. Here, the label-guidance signal reinforces this effect by anchoring each 
class manifold and preventing further cumulative drift, effectively keeping the 
distribution centered around class-conditional attractors.

For the \emph{unconditional retrained} GMC with $100$ iteration steps 
(Fig.~\ref{fig:mnist-drift-plots}, top-right panel), the local-drift curve (blue) 
drops sharply during the first 30 generations, after which the rate of decline 
slows to a small value. This indicates that each new generation still differs 
appreciably from its predecessor, with occasional spikes likely corresponding 
to mode-switching events. In contrast, the cumulative-drift curve (orange) 
rises almost linearly throughout the entire 100-step horizon, showing that the 
chain continues to move away from the original data manifold rather than 
settling into a stationary distribution. Taken together, a persistently high 
and fluctuating local drift, coupled with an unbounded cumulative drift, 
signals ongoing evolution.

In the \emph{latent-feedback} GMC with $250$ iteration steps 
(bottom panel of Fig.~\ref{fig:mnist-drift-plots}), the local-drift curve 
plunges during the first 20 iterations and then stabilizes at a small 
constant value. Meanwhile, the cumulative-drift curve rises steadily 
as the chain diverges from its initial distribution. During the first 100 
iterations, the chain remains in an active transient phase, followed by a slow 
transient phase for the next 100 iterations. Around the 200$^{th}$ 
iteration, the cumulative drift begins to taper off and eventually plateaus. 
This joint plateau pattern indicates empirical stationarity of the GMC.

\begin{figure*}[t]
    \centering
    \includegraphics[width=1\textwidth]{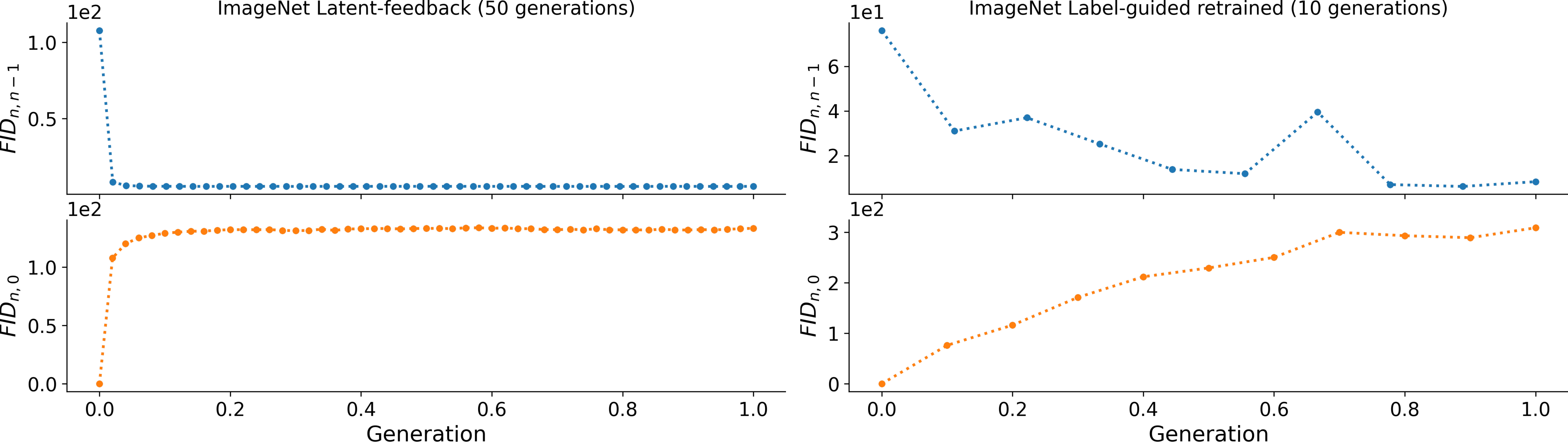}
    \caption{\textbf{Local and cumulative drifts in generational Markov chains on ImageNet-5.} 
    Each plot displays the local drift ($FID_{n,n-1}$, top sub-panel) 
    and the cumulative drift ($FID_{n,0}$, bottom sub-panel) across generations. 
    Results are shown for the latent-feedback diffusion model (left) 
    and the label-guided retrained diffusion model (right). 
    Both chains stabilize quickly, highlighting how the complexity of 
    the dataset can affect the behavior of the chain.}
    \label{fig:imgnet-drift-plots}
\end{figure*}

\noindent \textbf{ImageNet-5 dataset:} 
We experiment with two GMCs\textemdash latent-feedback and label-guided retrained\textemdash 
on the ImageNet-5 dataset at $128 \times 128 \times 3$ resolution. The 
diffusion-based framework provides the theoretical foundation for one-step 
positive density in both GMCs.

For the \emph{latent-feedback} GMC with $50$ iteration steps, we use InceptionV3 
features to provide latent feedback as conditional information in the diffusion 
process. Figure~\ref{fig:imgnet-drift-plots} (left) shows the drift plots for 
this GMC: the blue curve corresponds to local drift, and the orange curve to 
cumulative drift. Both curves change rapidly during the first few generations, 
followed by large plateaus that indicate empirical stationarity. 

For the \emph{label-guided retrained} GMC, we perform only $10$ iterations 
since the outputs begin to lose semantic meaning in nearly all classes by the 
5$^{\text{th}}$ generation. This is evident in the right panel of 
Figure~\ref{fig:imgnet-drift-plots}, where the drift plots show sharp changes 
during the first few iterations and little to no change during the last three. 
Here, the GMC reaches its stationary distribution very quickly. 

Across both GMCs, we observe that the chain converges to its stationary 
distribution within a few iterations. We posit that this behavior is 
influenced by the compressibility of the dataset: highly compressible images, 
such as MNIST digits, require only a small number of latent dimensions to 
faithfully represent the data, whereas ImageNet-5 images are far less 
compressible and demand significantly more dimensions to capture their semantic 
content (see global and local dimension plots in 
Figures~\ref{fig:quant-mnist} and~\ref{fig:quant-imgnet5}). This suggests that 
the number of iterations required to reach a stationary distribution is closely 
tied to the compressibility of the dataset.


\section{Neural Resonance: Expanded}\label{sec:neural-resonance}

The convergence of a generational Markov chain toward a stationary distribution mirrors, 
in the statistical domain, the acoustic phenomenon in Alvin Lucier's\textemdash \emph{I Am Sitting in a Room}. 
In Lucier’s piece, successive re-recordings of a spoken passage act 
as repeated applications of the room’s linear transfer operator; 
frequencies misaligned with the dominant eigenmodes decay exponentially, 
leaving only the resonant ``room chord.''  Likewise, each iterative step of a diffusion-based Markov kernel 
suppresses components orthogonal to the data manifold, 
so that after a finite mixing time only those feature-space directions 
whose associated eigenvalues cluster near one\textemdash the stationary law\textemdash persist. 
The exponential fall-off in ${FID}_{n,n-1}$ thus plays the 
same diagnostic role as the narrowing audio spectrum in Lucier’s experiment: 
both quantify the attenuation of sub-dominant modes and 
provide an empirical estimate of the spectral gap.  
Framed in this way, ergodicity is not merely a probabilistic guarantee 
but a form of ``\emph{neural resonance},'' whereby low probability modes are filtered 
through successive generative iterations until the invariant 
statistical structure of the data reverberates alone, 
like the resonant frequencies of a physical room.
Extending this fundamental principle beyond 
physical acoustics, we posit a direct conceptual analogy 
in generative neural networks, introducing the notion of \emph{neural resonance}. 
We define:
\begin{definition}[{\bf Neural resonance}]
    \label{def:neural-resonance}
    At the level of the entire data distribution, 
    neural resonance denotes the emergence of a low-dimensional invariant subspace that is preserved under 
    iterative generative updates.
\end{definition}
\noindent During convergence, the distribution variance projected onto directions orthogonal 
to this subspace decays exponentially, contracting its global effective dimensionality until it
plateaus at the dimension of the resonant space. Concurrently, 
variance within that emerging subspace\textemdash such as intra-class spread\textemdash 
can transiently grow, shrink, or remain stable, modifying the local structure of the clusters
without reversing the global collapse.
Not every iterative-feedback generative model is guaranteed to exhibit neural resonance;
it only shows when the two necessary conditions join forces:
\begin{itemize}
    \item \textbf{Directional contraction:} Each iterative-feedback step steadily attenuates
    a broad set of latent-space directions globally, shrinking their influence bit by bit 
    rather than leaving them unchanged.
    \item \textbf{Ergodicity:} No matter where the chain starts, its updates eventually 
    explore a range of behaviors and head toward a stationary distribution.
\end{itemize}
When both conditions hold, modestly damped directions quickly fade, 
while nearly neutral directions retain the probability mass. 
The distribution therefore collapses onto a thin, lower-dimensional ``resonant'' sheet 
that remains stable under further iterations. When  there is no directional contraction, the space never thins, 
and without ergodicity the generational chain can settle into disjoint attractors 
or drift without bound. Neural resonance pinpoints the regime in 
which global mixing and directional contraction cooperate to reveal\textemdash 
and continually reinforce\textemdash a persistent low-dimensional core, 
independent of how the chain began.

The resonance picture makes concrete, testable predictions about 
how \emph{global} and \emph{local} geometry evolve while the chain mixes.  
Globally, the participation ratio $\mathrm{PR}_G(g)$ should decline monotonically 
until it plateaus at the dimension of the resonant subspace, 
signaling the collapse of variance in all orthogonal directions.  
Locally, however, the within–class spread $\sigma_{\mathrm{intra}}(g)$
and the neighborhood embedding dimension $m_{\mathrm{LB}}(g)$ can transiently increase, 
decrease, or remain stable as residual noise and semantic detail 
are funneled into the shrinking subspace.  
By tracking the joint trajectory $\bigl(\sigma_{\mathrm{intra}}, m_{\mathrm{LB}}, \mathrm{PR}_G\bigr)$
we can understand the dynamics of the latent manifold, 
\emph{how} intra-class spread reorganizes en route, 
and \emph{which} latent-manifold regime\textemdash semantic expansion or contraction\textemdash dominates.  
The following subsection formalizes the eight qualitatively distinct sign-patterns that can arise.

\begin{table}[t]
  \centering
  \caption{\textbf{Latent‑manifold dynamics under neural resonance.}
    Each row lists a distinct combination of directional change in the local spread ($\sigma_{intra}$), 
    the local curvature measure ($m_{LB}$), and the global participation ratio ($PR_G$). 
    Patterns are grouped into two regimes\textemdash semantic expansion ($\sigma_{intra}$ increasing) 
    and semantic contraction ($\sigma_{intra}$ decreasing)\textemdash and each is labeled 
    according to its geometric character (e.g., coherent expansion, wrinkled contraction). 
    An animation illustrating these patterns is provided in the Supplementary Video.}
  \begin{tabular}{llcccl}
    \toprule
    \multicolumn{2}{c}{\textbf{Manifold Behavior}} 
      & \multicolumn{2}{c}{\textbf{Local Metrics}} & \textbf{Global Metric} & \textbf{Dimensional Pattern}\\
    \cmidrule(lr){1-2} \cmidrule(lr){3-4} \cmidrule(lr){5-5}
    Local & Global & $\sigma_{intra}$ & $m_{LB}$ & $PR_G$ & \\
    \midrule
    \addlinespace[3pt]
    \multicolumn{6}{l}{\textbf{Semantic Expansion Regime}} \\
    \addlinespace[3pt]
    Expansion       & Expansion    & $\uparrow$ & $\uparrow$ & $\uparrow$ & Coherent Expansion (CE)\\[2pt]
    Expansion       & Contraction  & $\uparrow$ & $\uparrow$ & $\downarrow$ & Wrinkled Expansion (WE)\\[2pt]
    Expansion     & Expansion    & $\uparrow$ & $\downarrow$ & $\uparrow$ & Anisotropic Expansion (AE)\\[2pt]
    Expansion     & Contraction  & $\uparrow$ & $\downarrow$ & $\downarrow$ & Oblate Expansion (OE)\\[2pt]
    \midrule
    \addlinespace[2pt]
    \multicolumn{6}{l}{\textbf{Semantic Contraction Regime}} \\
    \addlinespace[2pt]
    Contraction    & Contraction  & $\downarrow$ & $\downarrow$ & $\downarrow$ & Coherent Contraction (CC)\\[2pt]
    Contraction   & Expansion    & $\downarrow$ & $\downarrow$ & $\uparrow$ & Anisotropic Contraction (AC)\\[2pt]
    Contraction     & Contraction  & $\downarrow$ & $\uparrow$   & $\downarrow$ & Wrinkled Contraction (WC)\\[2pt]
    Contraction     & Expansion    & $\downarrow$ & $\uparrow$   & $\uparrow$ & Oblate Contraction (OC)\\[2pt]
    \bottomrule
  \end{tabular}
  \label{table:ideal-manifold-behaviors-appendix}
\end{table}
\subsection{Latent Manifold Dynamics: Expanded}\label{subsec:latent-dynamics}

We categorize the semantic phenomena according to their local impact on 
the latent manifold 
into two overarching regimes based on the increase and decrease in $\sigma_{intra}$: 
the \emph{semantic‐expansion} ($\sigma_{\mathrm{intra}}\uparrow$) 
and the \emph{semantic‐contraction} ($\sigma_{\mathrm{intra}}\downarrow$) regimes. 
Table \ref{table:ideal-manifold-behaviors-appendix} systematically catalogs every 
qualitatively distinct combination of local and global manifold behaviors.
An animation illustrating the dimensional patterns is included in the supplementary video.
Within the semantic expansion regime, the four rows describe the behavior of
$m_{LB}$ and $PR_G$ with increase in $\sigma_{intra}$. 
Similarly, the semantic contraction regime enumerates behaviors of 
$m_{LB}$ and $PR_G$ 
for a decreasing $\sigma_{intra}$. 
Collectively, these eight sign‐patterns ensure 
that any empirical latent‐manifold trajectory can be classified 
and interpreted within this unified framework.
We briefly describe the dimensional behavior of latent manifolds below:
\begin{itemize}
  \item \textbf{Coherent Expansion} ($\mathbf{\sigma_{intra}}\uparrow,\mathbf{m_{LB}}\uparrow,\mathbf{PR_G}\uparrow$): increase in 
  $\sigma_{intra}$ activates new axes of local variations, as evidenced by increase in $m_{LB}$. 
  These new axes remain coherent across classes and align 
  with the principal directions carrying significant global variance, increasing $PR_G$. 
  Geometrically, the manifold simultaneously puffs out and unfolds into previously unoccupied subspaces, 
  analogous to a balloon inflating so that new portions of its surface extend into directions that were previously unoccupied.
  \item \textbf{Wrinkled Expansion} ($\mathbf{\sigma_{intra}}\uparrow,\mathbf{m_{LB}}\uparrow,\mathbf{PR_G}\downarrow$): increase in 
  $\sigma_{intra}$ forms small‐scale `wrinkles' or nonlinear creases within each cluster, 
  increasing $m_{LB}$. However, these wrinkles do not align with the principal 
  direction carrying significant global variance, 
  effectively decreasing $PR_G$. 
  Geometrically, the manifold fans out locally in new directions as wrinkles, creases or folds, 
  but those directions cancel or average out globally, much like a sheet of fabric crumpling into 
  many small folds that do not produce a consistent large-scale orientation.
  \item \textbf{Anisotropic Expansion} ($\mathbf{\sigma_{intra}}\uparrow,\mathbf{m_{LB}}\downarrow,\mathbf{PR_G}\uparrow$): increase in 
  $\sigma_{intra}$ stretches each cluster anisotropically along different major axes, 
  causing a decrease in $m_{LB}$. Nevertheless, this anisotropic stretching along different axes for different clusters 
  activates additional global principal directions of significant variance, increasing $PR_G$. 
  Thus, each cluster `grows out' along its own axis\textemdash collapsing local 
  dimensions while broadening the global footprint, much like lumps of play-dough being 
  stretched in different directions, each elongating along its own axis while together 
  occupying a wider region.
  \item \textbf{Oblate Expansion} ($\mathbf{\sigma_{intra}}\uparrow,\mathbf{m_{LB}}\downarrow,\mathbf{PR_G}\downarrow$): increase in 
  $\sigma_{intra}$ expands the clusters along one or a few dominant directions, causing local clusters to oblate 
  and collapse onto those axes, decreasing $m_{LB}$. 
  Globally, the same (old) principal axes capture this increase in local variance, decreasing $PR_G$. 
  The manifold oblates outward, similar to a balloon compressed between two plates, 
  expanding sideways along a few axes while collapsing along others.
  \item \textbf{Coherent Contraction} ($\mathbf{\sigma_{intra}}\downarrow,\mathbf{m_{LB}}\downarrow,\mathbf{PR_G}\downarrow$): This is the
  reverse of the \emph{coherent expansion} process. The decrease in $\sigma_{intra}$ removes new 
  axes of local variations, decreasing $m_{LB}$. These disappearing local axes remain coherent across 
  classes and align with the principal directions carrying significant 
  global variance, decreasing $PR_G$.
  \item \textbf{Anisotropic Contraction} ($\mathbf{\sigma_{intra}}\downarrow,\mathbf{m_{LB}}\downarrow,\mathbf{PR_G}\uparrow$): The dimensional 
  pattern of anisotropic contraction is 
  reverse of \emph{wrinkled expansion} process. The decrease in $\sigma_{intra}$ removes the 
  `wrinkles' or 'nonlinear creases' from the clusters converting into anisotropic cluster with different major axes.
  This local contraction decreases $m_{LB}$, 
  but different clusters contract along distinct axes, activating new global principal directions of significant variance, 
  increasing $PR_G$. In other words, each class shrinks along 
  its own preferred axis anisotropically, causing the ensemble to diversify across more 
  global modes despite overall contraction.
  \item \textbf{Oblate Contraction} ($\mathbf{\sigma_{intra}}\downarrow,\mathbf{m_{LB}}\uparrow,\mathbf{PR_G}\uparrow$): The dimensional 
  pattern of oblate contraction is the 
  reverse of \emph{oblate expansion} process. The decrease in $\sigma_{intra}$ contracts the 
  clusters along one or a few dominant directions, causing the local clusters to inflate along 
  those axes, increasing $m_{LB}$. Globally, new principal directions capture this decrease in 
  local variance, increasing $PR_G$.
  \item \textbf{Wrinkled Contraction} ($\mathbf{\sigma_{intra}}\downarrow,\mathbf{m_{LB}}\uparrow,\mathbf{PR_G}\downarrow$): The dimensional 
  pattern of the wrinkled contraction is the
  reverse of \emph{anisotropic expansion} process. The decrease in $\sigma_{intra}$ de-stretches each 
  clusters creating wrinkles or nonlinear creases, increasing $m_{LB}$. Nevertheless, 
  these de-stretching removes some of the previously required principal direction of variance, causing 
  $PR_G$ to decrease. 
\end{itemize}

\begin{figure*}[t]
    \centering
    \includegraphics[width=1\textwidth]{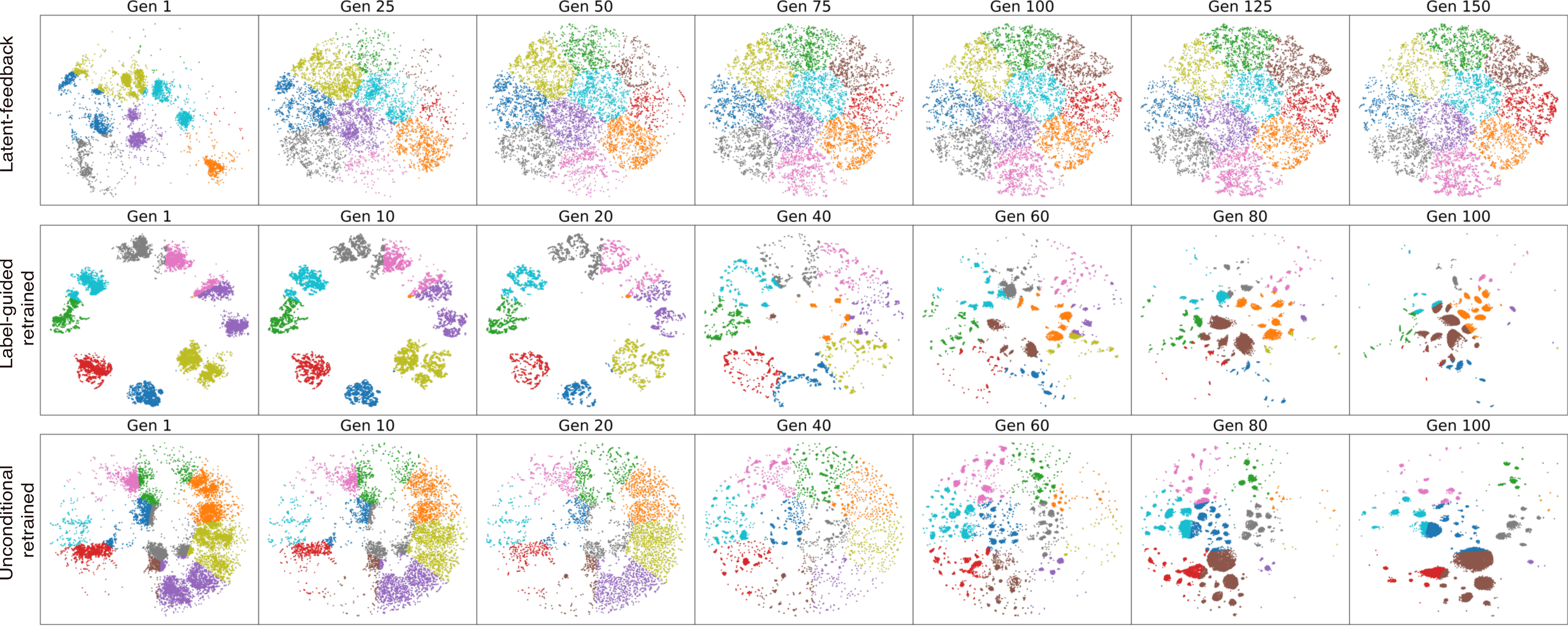}
    \caption{\textbf{t-SNE visualization of generational Markov chains on MNIST.} 
    Each plot shows the two-dimensional t-SNE embedding of samples drawn from a generational Markov chain, grouped into ten clusters based on latent-space proximity rather than digit class. 
    For the label-guided and unconditional retrained models, 
    the first generation is trained on the MNIST training set 
    and each subsequent generation uses 50,000 images. 
    The latent-feedback chain starts from the MNIST test set, with 10,000 images per generation.}
    \label{fig:tsnes}
\end{figure*}

\subsection{Dimensional Pattern of GMCs}

\noindent To characterize the latent manifold dynamics of GMCs, we begin with low-dimensional 
projections (t-SNE) of their latent manifolds. These visualizations, see Fig.~\ref{fig:tsnes}, reveal that different GMCs 
follow distinct evolutionary patterns, reflected in the organization 
and diffusion of clusters over time. Building on these observations, 
we then turn to a quantitative analysis of global (participation ratio) 
and local (Levina-Bickel intrinsic dimensionality) manifold behaviors. 
By tracking intra-class spread, intrinsic dimensionality, and participation ratio, 
we assign characteristic dimensional patterns discussed in Table~\ref{table:ideal-manifold-behaviors-appendix} 
to different iterative phases of GMCs.

\subsubsection{Low-Dimensional Projections of GMCs}

To visualize the low-dimensional projections of GMCs in latent space, we compute
a single t-SNE embedding jointly optimized with normalize perplexity and learning-rate across 
generation, ensuring comparability of all samples at each generation. 
t-SNE is a nonlinear dimensionality-reduction technique that maps 
high-dimensional data into two or three dimensions by converting pairwise similarities into 
joint probability distributions and then minimizing their Kullback-Leibler divergence~\cite{vandermaaten08a}. 
Fig. ~\ref{fig:tsnes} presents 2D t-SNE manifold projections for the three ergodic GMCs\textemdash 
latent-feedback, label-guided retrained, and unconditional-retrained\textemdash on the MNIST dataset. 
Each plot is computed from the same latent-manifold activations used for other metric estimations. 
They are jointly estimated with identical perplexity and learning-rate settings. 
Even in two dimensions there is a clear indication of different GMCs following different 
dimensional patterns. 

The \emph{latent-feedback} GMC (top) begins with compact clusters that quickly diffuse into previously 
unoccupied regions, consistent with the early rise in $\sigma_{intra}$. The local expansion affects the 
local and global manifold dimension differently. The subspace stabilizes or evolves very slowly towards the end. 
The \emph{label-guided retrained} GMC (middle) begins with sufficient cluster sizes for each class, but 
the clusters shrink without mixing with other classes (class-label acting as `anchors', no sample from cluster boundaries), 
indicating a significant decrease in feature variance $\sigma_{intra}$ around the middle generations. 
In later generations, $\sigma_{intra}$ plateaus, indicating no significant change in distribution. 
The \emph{unconditional-retrained} GMC (bottom) also starts with large clusters, 
but in the absence of class-label anchors, the clusters slowly 
shrink (decrease in $\sigma_{intra}$) 
and drift (with more samples from cluster boundaries appearing as incomplete digits) at the same time. 
As the GMC progresses, the distribution includes more incomplete digits, 
and digit 4 completely disappears from the sampled distribution 
around the $87^{th}$ generation. This continues until 
the end of our experiment at the $100^{th}$ generation.

Taken together, the t-SNE maps corroborate the quantitative curves: each GMC
follows a distinct trajectory toward its stationary distribution, and changes in local spread
($\sigma_{intra}$) reshape both global and local latent-manifold
geometry in different ways, consistent with the neural-resonance framework.

\begin{figure*}[t]
    \centering
    \includegraphics[width=1\textwidth]{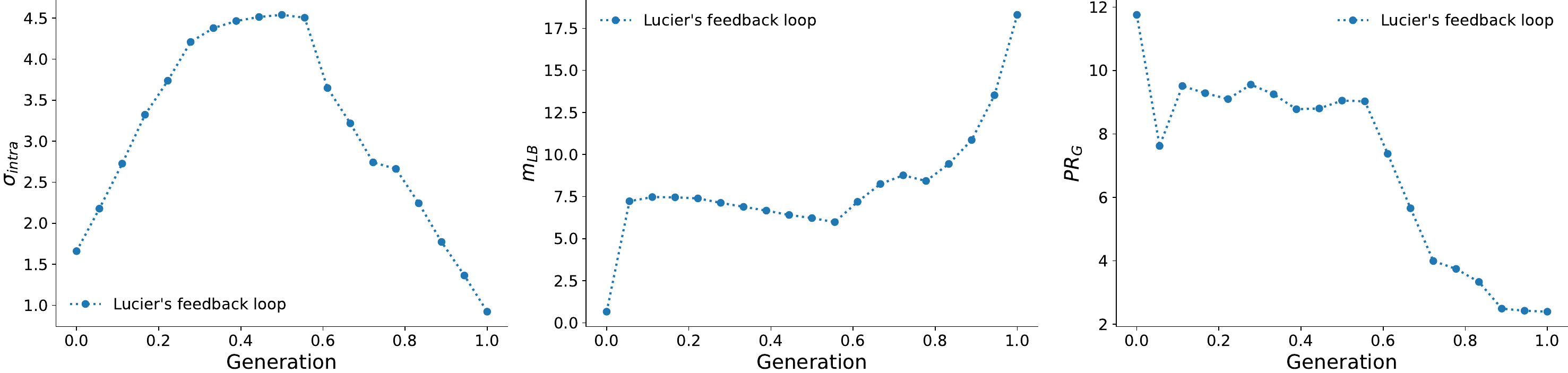}
    \caption{\textbf{Dimensional pattern of a functional analogue of Lucier’s feedback GMC.} 
    We track the dimensional pattern of the GMC using three metrics\textemdash $\sigma_{intra}$, 
    $m_{LB}$, and $PR_G$\textemdash computed in the Wav2Vec 2.0 latent space 
    for an audiobook of \emph{War and Peace} convolved with OpenAIR impulse-response signals. 
    In the early phase the latent manifold undergoes a wrinkled expansion, 
    followed by an oblate expansion in the middle phase, 
    and a wrinkled contraction in the final phase, nearly reducing $PR_G$ to zero
    and driving $m_{LB}$ to very large values. The chain does not exhibit neural resonance.}
    \label{fig:quant-lucier}
\end{figure*}
\subsubsection{Global and Local Dimensions of GMCs}

We track the dynamics of the latent-manifold of GMCs using intra-class spread $\sigma_{intra}$, 
intrinsic dimensionality $m_{LB}$, and participation ratio $PR_G$ (discussed in Sec.~\ref{sec:dataset-and-metrics}). 
We discuss the characteristics of the GMCs and interpret the manifold dynamics using 
the dimensional patterns described in Table~\ref{table:ideal-manifold-behaviors-appendix} \& 
Sec.~\ref{subsec:latent-dynamics}. To systematically analyze the dimensional patterns, we divide 
the whole iterative process into three phases\textemdash the \emph{early} phase, the \emph{middle} phase, and 
the \emph{late} phase.

\paragraph{Lucier's Feedback Loop}
We replicate a functional analogue of Lucier's experiment on a recording of a reading of \textit{War \& Peace} using 10 different
impulse responses from the OpenAIR dataset. We consider each impulse response as a class. This generation 
Markov chain does not have one-step positive density (Lemma~\ref{lemma:one-step-density}) and is not an ergodic GMC. 

For latent-manifold tracking, we segmented each recording into non-overlapping partitions of 20 seconds and encoded every 
window with Wav2Vec 2.0 embeddings~\cite{baevski2020wav2vec}. 
Fig.~\ref{fig:quant-lucier} charts the latent-manifold dynamics across iterations. 
The early phase shows \emph{wrinkled-expansion} where 
both $\sigma_{intra}$ and $m_{LB}$ increase but $PR_G$ decreases. The middle phase shows 
dynamics close to \emph{oblate-expansion} behavior marked by slow increase of $\sigma_{intra}$, 
and a slow decrease of $m_{LB}$ and $PR_G$. 
The late phase shows \emph{wrinkled-contraction} marked by decrease in $\sigma_{intra}$ and $PR_G$ while $m_{LB}$ increases. 
Despite the sustained directional contraction in the neural or embedding space, 
the functional analogue of Lucier's experiment does not show neural resonance\textemdash 
the original physical experiment does exhibit acoustic resonance\textemdash 
because the corresponding GMC is non-ergodic.

\begin{figure*}[t]
    \centering
    \includegraphics[width=1\textwidth]{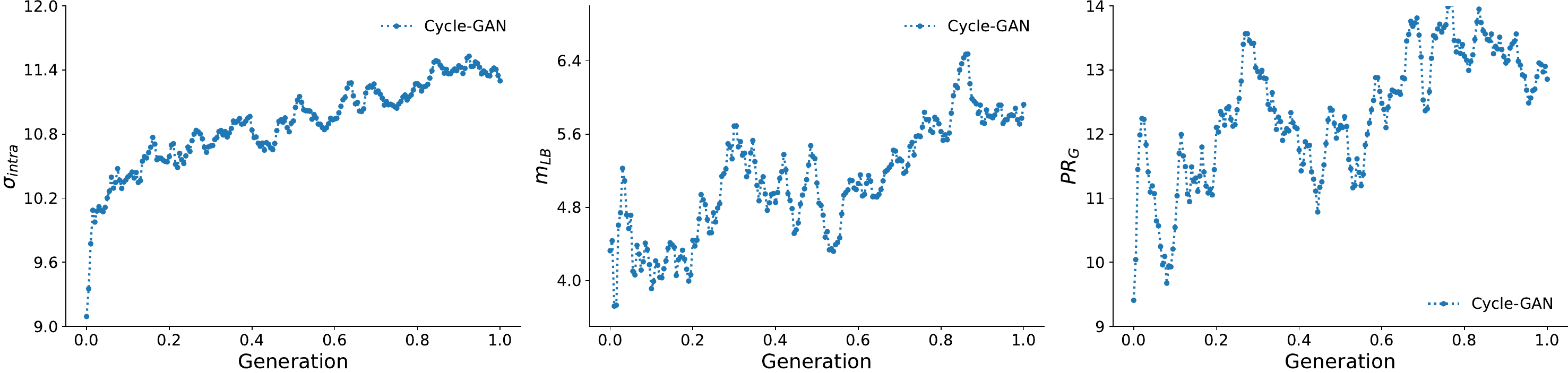}
    \caption{\textbf{Dimensional pattern of CycleGAN GMC} From left to right, the plot
    tracks intra-class spread, local and global dimensions on Horse-Zebra dataset for 200 iterations.}
    \label{fig:quant-cyclegan}
\end{figure*}
\paragraph{Cyclic Image Translation}\label{subsec:cycle-gan}
We apply CycleGAN to the Horse-Zebra corpus, yielding a fully deterministic 
transformation operator. Because the update map lacks a one-step positive 
density (Lemma~\ref{lemma:one-step-density}), the associated generative 
Markov chain is non-ergodic: it does not mix toward a stationary 
distribution but instead settles into a limit cycle governed by 
the reconstruction loss and the two image domains.

Latent-manifold dynamics are tracked with features from the penultimate 
layer of a pretrained Inception-V3 encoder. Fig.~\ref{fig:quant-cyclegan} reports the 
three metrics over 200 iterations. Generation-to-generation behavior is irregular\textemdash 
each generator repeatedly attempts to undo its counterpart’s output\textemdash 
but the aggregate trend resembles a \emph{coherent-expansion}: the within-class 
spread $\sigma_{intra}$, the local dimensionality $m_{LB}$, and the 
global participation ratio $PR_G$ all increase. Because neither global 
contraction nor mixing is observed, this CycleGAN chain never enters the neural-resonance regime.

\begin{figure*}[t]
    \centering
    \includegraphics[width=1\textwidth]{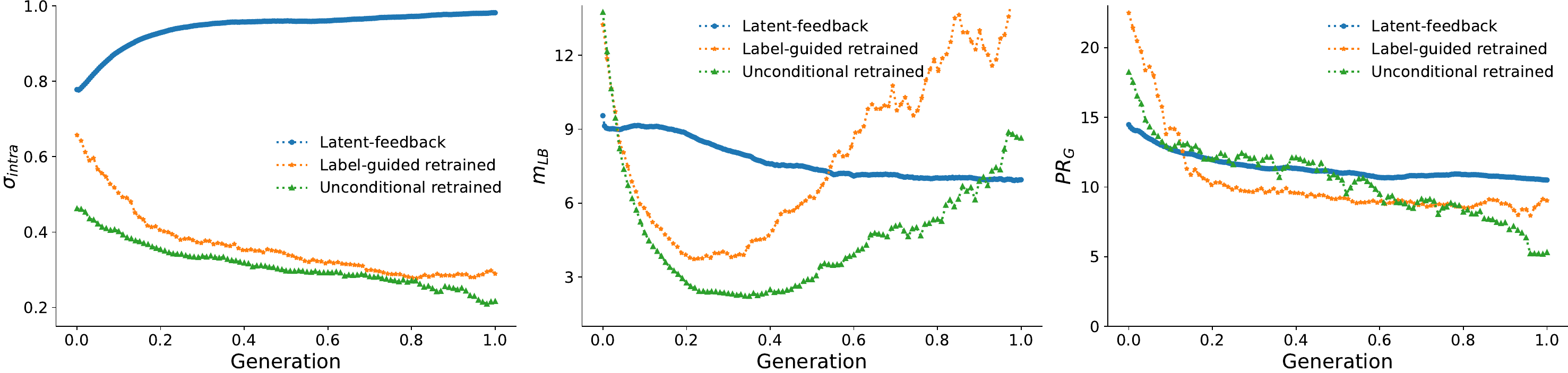}
    \caption{\textbf{Dimensional pattern of GMCs on MNIST dataset.} 
    From left to right, the plot tracks intra-class spread, local and 
    the global dimensions of latent-feedback (blue, 200 iterations), 
    label-guided retrained (orange, 100 iterations) 
    and unconditional retrained (green, 100 iterations) GMCs.}
    \label{fig:quant-mnist}
\end{figure*}

\paragraph{Latent-Feedback Diffusion Models}\label{subsec:latent-feedback}
We run latent-feedback experiments with diffusion models on the MNIST and ImageNet-5 datasets.
Because a diffusion step admits a one-step positive density (Lemma \ref{lemma:one-step-density}) 
and the resulting feedback chain also meets the criteria of local self-transition, 
petite sets, and Harris recurrence (Lemmas \ref{lemma:local-self-transition}–\ref{lemma:harris-recurrece}), 
the associated generative Markov chain is ergodic.  Given enough feedback iterations, 
it should therefore mix toward a stationary distribution.

For the MNIST experiment we monitor latent-manifold dynamics over 250 feedback 
iterations using features from a pretrained classifier. Fig. \ref{fig:quant-mnist} (blue curve) 
shows the first 100 iterations, after which the behavior remains qualitatively unchanged. 
Throughout, the latent-feedback chain exhibits an \emph{oblate-expansion} pattern: 
the within-class spread $\sigma_{intra}$ rises early and then plateaus, 
while both the local dimensionality $m_{LB}$ and the global participation ratio $PR_G$ 
contract steadily before stabilizing at a fixed value. Because the process satisfies 
ergodicity and displays strong directional contraction, it meets the two criteria for neural resonance.

For the ImageNet-5 latent-feedback chain we monitored manifold evolution using features from 
the penultimate layer of a pretrained Inception-V3 encoder. Fig. \ref{fig:quant-imgnet5} (blue curve) 
depicts the first 50 feedback iterations. 
In the early stages both the within-class spread $\sigma_{intra}$ and 
the local intrinsic dimensionality $m_{LB}$ rise before leveling off, 
while the global participation ratio $PR_G$ collapses rapidly and then plateaus. 
This combination\textemdash local variance inflating inside a concurrently shrinking global subspace\textemdash 
matches the \emph{wrinkled-expansion} regime. Because the chain is ergodic and 
displays strong directional contraction, it satisfies the two conditions required for neural resonance.

\begin{figure*}[t]
    \centering
    \includegraphics[width=1\textwidth]{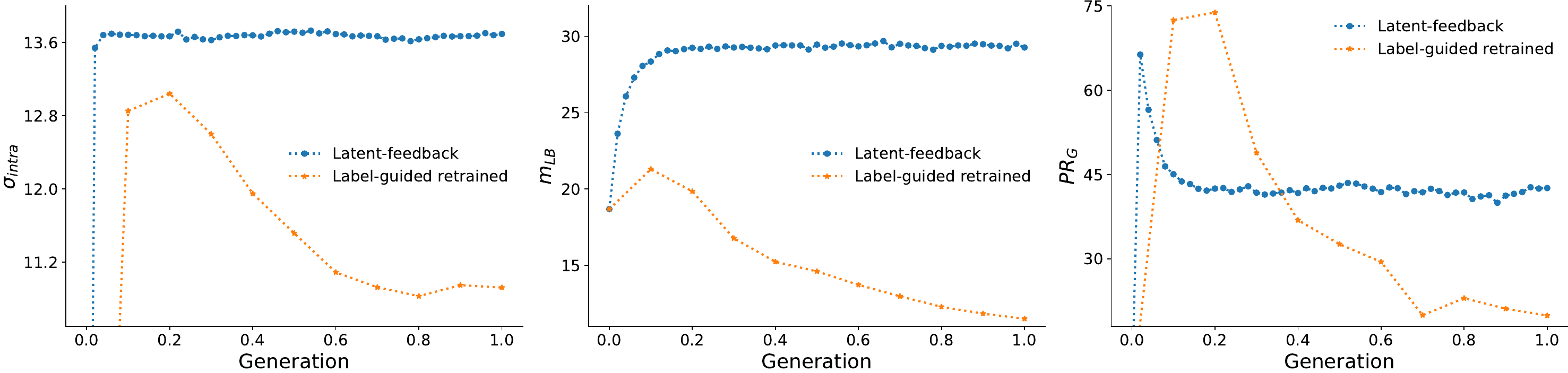}
    \caption{\textbf{Dimensional pattern of GMCs on ImageNet-5 dataset.}
    From left to right, the plot tracks intra-class spread, intrinsic dimensionality and 
    global dimensions of the latent-feedback (blue, 50 iterations) 
    and label-guided retrained (orange, 11 iterations) diffusion models.}
    \label{fig:quant-imgnet5}
\end{figure*}
\paragraph{Label-Conditional Retrained Diffusion Models}\label{subsec:cond-retrained}
We run label-conditional retrained model experiments with diffusion models on the MNIST 
and ImageNet-5 datasets. These GMCs are also ergodic satisfying 
Lemmas \ref{lemma:one-step-density}-\ref{lemma:harris-recurrece}.
Given enough feedback iterations, it should therefore mix towards a stationary distribution. 

In the MNIST study we track the generative Markov chain for 100 feedback iterations; 
its latent-manifold trajectory is plotted as the orange curve in Fig. \ref{fig:quant-mnist}.
During the initial iterations the chain undergoes a pronounced \emph{coherent-contraction}: 
all three metrics\textemdash within-class spread $\sigma_{intra}$, local dimension $m_{LB}$, 
and global participation ratio $PR_G$\textemdash fall sharply.
From the middle phase onward, $m_{LB}$ begins a gradual rebound 
while $\sigma_{intra}$ and $PR_G$ stabilize or continue to decline, 
marking a transition to \emph{wrinkled-contraction}.
Because the chain is ergodic and exhibits sustained directional damping, 
it fulfills the two criteria required for neural resonance.

In the ImageNet-5 experiment we track the chain for only ten feedback iterations; 
the orange curve in Fig. \ref{fig:quant-imgnet5} reveals an aggressive, 
monotonic decline in all three manifold metrics. 
By iteration 10 the outputs have deteriorated to near-random noise 
with only faint object silhouettes remaining. The chain nevertheless remains ergodic 
and exhibits rapid directional contraction, thereby satisfying the prerequisites for neural resonance.

\paragraph{Unconditional Retrained Diffusion Model}\label{subsec:uncond-retrained}
We evaluate the unconditional retrained diffusion model solely on the MNIST dataset. 
Like the latent-feedback and label-guided variants, this chain possesses 
a positive one-step density and satisfies the petite-set and Harris-recurrence conditions, 
making it ergodic; with sufficient iterations it should therefore 
converge to a stationary distribution.

The chain is run for 100 feedback steps, with each batch re-classified by a pretrained 
digit classifier before manifold analysis. The green curve in Fig. \ref{fig:quant-mnist} summarizes 
its trajectory: the within-class spread $\sigma_{intra}$ and the global participation ratio $PR_G$ 
fall almost linearly, while the intrinsic dimensionality $m_{LB}$ drops sharply 
at first and then rebounds slightly. Although the process meets both prerequisites for 
neural resonance\textemdash ergodicity and sustained directional contraction\textemdash the characteristic resonant 
plateau (see Fig.\ref{fig:mnist-drift-plots} top right panel) has not 
yet appeared. This is also evident in the $PR_G$ plot, where 
the global dimensionality continues to contract without showing signs of stabilization
within the first 100 iterations.

\section{Discussion}

Our study offers a unified theoretical and empirical perspective on 
how generational Markov chains (GMCs)\textemdash iterative models that recycle their own outputs\textemdash evolve 
in latent space. By treating examples as diverse as the functional analogue of Lucier’s acoustic feedback loop, CycleGAN image translation, 
latent-feedback diffusion sampling, and both unconditional and label-guided retraining models, 
we derive general conditions under which a GMC is \emph{ergodic}. To assess its empirical stationarity, 
we introduce two complementary FID traces: $\operatorname{FID}_{n,n-1}$, which 
serves as a \emph{local-drift} indicator, and $\operatorname{FID}_{n,0}$, 
which acts as a \emph{cumulative-drift} indicator. When both 
metrics change rapidly the chain is in an \emph{active-transient} phase; 
when one curve evolves slowly while the other has flattened, the 
system is in a \emph{slow-transient} phase; and when 
both traces plateau simultaneously, the GMC has reached empirical stationarity.

Drawing inspiration from Alvin Lucier’s original experiment, 
which exhibits acoustic resonance (but our functional analogue lacks ergodicity, hence no neural resonance), 
and examining our formulation 
of GMCs across diverse settings, we identified an analogous 
phenomenon in latent feature space, which we term \emph{neural resonance}.
Two conditions are necessary for its emergence: (i) \emph{ergodicity}, which 
guarantees convergence to a stationary distribution, 
and (ii) \emph{directional contraction}, which drives variance 
in non-resonant directions toward zero. When both criteria are met, 
the GMC collapses onto a low-dimensional, update-invariant subspace; 
when either is violated, the chain either cycles within a finite set of attractors 
(as in CycleGAN image translation) or drifts without bound. We link 
this theory to three empirical manifold metrics\textemdash $\sigma_{\text{intra}}$, $m_{LB}$, and $\mathrm{PR}_{G}$\textemdash 
whose joint trajectories reveal eight characteristic dimensional patterns in latent space. Together, 
these measures form a compact yet powerful toolkit for diagnosing the dynamics of iterative generative systems.

Several caveats temper these findings. All quantitative metrics were extracted 
in fixed feature spaces (Inception-V3 for images and Wav2Vec 2.0 for audio); 
the absolute plateau heights of the drift curves therefore depend on the specific 
embedding. The benchmark suites 
employed here, MNIST, ImageNet-5, and OpenAIR, are modest in scale compared 
with contemporary foundation models, so the interplay between neural resonance 
and transformer-sized architectures remains an open research question.

We highlight two directions for future exploration.
\emph{(i) Mitigation.} Designing noise 
schedules or regularizers that either delay or suppress generational collapse 
without sacrificing sample fidelity. \emph{(ii) Cross-modal generalization.} Extending the 
resonance framework to text-only and vision-language models, where 
directional contraction must be defined in semantic rather than pixel space.

In summary, the combination of Markov-chain theory and feature-space geometry 
presented in this work provides a principled lens through which to analyze\textemdash and 
ultimately control iteratively trained generative models. The empirical signatures 
introduced here should enable practitioners to detect generational collapse 
at an early stage and intervene before complete collapse of diversity. As generative AI systems 
assume ever more autonomous roles, such safeguards will be essential for 
preserving fidelity, variety, and, ultimately, trustworthiness in their outputs.


\bibliographystyle{unsrt}
\bibliography{paper}

\end{document}